\documentclass[12pt]{article}
\usepackage{amsthm,amsmath,amssymb,bbm,bm}
\usepackage{multirow}
\usepackage{xr-hyper}
\usepackage[pdftex]{graphicx}
\usepackage{subfigure}
\usepackage{wrapfig}
\usepackage{CJK}
\usepackage{array}
\usepackage{url}
\usepackage{booktabs}
\usepackage{algorithm}
\usepackage[dvipsnames]{xcolor}
\usepackage{algorithmic}
\usepackage{placeins}
\usepackage{mathtools}
\usepackage{romannum}
\usepackage{mathrsfs}
\usepackage{dsfont}
\usepackage{natbib}
\usepackage{relsize}
\usepackage{rotating}
\usepackage{enumitem}
\usepackage{setspace}
\usepackage{tikz}
\usepackage{multirow}
\usepackage{tikz}
\usepackage{caption}
\usepackage{subcaption}
\usetikzlibrary{shapes,shapes.multipart}
\usetikzlibrary{arrows}
\usepackage[page,header]{appendix}
\usepackage{titletoc}
\usepackage{titling}

\usepackage[utf8]{inputenc} 
\usepackage[T1]{fontenc}    
\usepackage{url}            
\usepackage{booktabs}       
\usepackage{amsfonts}       
\usepackage{nicefrac}       
\usepackage{microtype}      
\usepackage{xcolor} 
\usepackage{amsmath,amssymb,amsthm, float}
\usepackage{graphicx}
\usepackage{mathtools}
\usepackage{float}
\usepackage{subcaption}
\newtheorem{theorem}{Theorem}
\newtheorem{assumption}{Assumption}
\newtheorem{proposition}{Proposition}

\def\T{\mathcal{T}}

\usepackage{multibib}

\newcites{SM}{References}

\usepackage[colorlinks]{hyperref}
\hypersetup{
    colorlinks,
    linkcolor={red},
    filecolor={red}, 
    urlcolor={blue},
    citecolor={blue}
}

\makeatletter
\newcommand*{\addFileDependency}[1]{
  \typeout{(#1)}
  \@addtofilelist{#1}
  \IfFileExists{#1}{}{\typeout{No file #1.}}
}
\makeatother

\usepackage{etoolbox}
\newcommand{\zerodisplayskips}{%
  \setlength{\abovedisplayskip}{6.5pt}%
  \setlength{\belowdisplayskip}{6.5pt}%
  \setlength{\abovedisplayshortskip}{6.5pt}%
  \setlength{\belowdisplayshortskip}{6.5pt}}
\appto{\normalsize}{\zerodisplayskips}
\appto{\small}{\zerodisplayskips}
\appto{\footnotesize}{\zerodisplayskips}

\makeatletter
\newcommand{\vast}{\bBigg@{3}}
\makeatother

\usepackage{chngcntr}
\counterwithin{equation}{section}
\definecolor{ao(english)}{rgb}{0.0, 0.5, 0.0}

\theoremstyle{plain}

\counterwithin{thm}{section}

\counterwithin{proposition}{section}

\counterwithin{corollary}{section}

\counterwithin{theorem}{section}

\def\##1\#{\begin{align}#1\end{align}}
\def\$#1\${\begin{align*}#1\end{align*}}

\def\beq#1\eeq{\begin{equation}#1\end{equation}}
\def\baa#1\eaa{\begin{eqnarray}#1\end{eqnarray}}
\def\bal#1\eal{\begin{align}#1\end{align}}

\DeclareMathOperator*{\argmin}{arg\,min}

\def\T{\text{T}}

\newcommand{\blind}{1}

\usepackage[margin=0.9in]{geometry}
\begin{document}

\setcounter{page}{1}
\pagenumbering{arabic}


\if1\blind
{ \title{\bf Offline Policy Evaluation via Mixed Bellman Residuals and Adaptive Critic Representations}}

\author{
Amitakshar Biswas \quad Yuhan Li \quad Ruoqing Zhu
\\[6pt]
\normalsize Department of Statistics, University of Illinois at Urbana-Champaign
}\fi

\if0\blind
{
    \title{\bf Offline Policy Evaluation via Mixed Bellman Residuals and Adaptive Critic Representations}
} \fi

\newpage

\date{}

\maketitle
\vspace{-10mm}
\begin{abstract}
Evaluating a target policy using data generated by a different behavior policy remains a fundamental challenge in reinforcement learning. While most existing work relies on the standard one-step Bellman residual, we consider a convex combination of one-step and two-step residuals with a fixed mixing weight. In ideal settings, this mixed Bellman formulation can provide a natural bias--variance trade-off between approximation error under a restricted value-function class and the increased variance arising from multi-step importance weighting. To solve this mixed residual optimization, we adopt a minimax formulation involving a critic function. Unlike standard approaches that rely on a fixed functional class, we construct a data-dependent critic representation using predicted future feature directions which effectively induces a kernel adapted to the underlying transition dynamics. This allows the critic to focus on directions that are most relevant for the estimated Bellman error. To control overfitting, we use sample splitting to construct the critic and estimate the value function on separate data subsets. Simulation studies and MetaWorld tasks illustrate the effect of the mixing parameter and show that intermediate residual combinations can improve value estimation in challenging settings.
\end{abstract}
\noindent{\bf Keywords:}  Offline Reinforcement Learning, Policy Evaluation, Minimax Estimation, Adaptive Critic, Multi-Step Bellman Operator
\vfill

\newpage


%

\newcommand{\fix}{\marginpar{FIX}}
\newcommand{\new}{\marginpar{NEW}}

\section{Introduction}\label{intro}

Reinforcement learning \citep{sutton2018reinforcement} has become a central framework for sequential decision-making, with applications ranging from healthcare \citep{murphy2001marginal} and personalized medicine \citep{luckett2019estimating} to robotics \citep{levine2020offline} as well as autonomous driving \citep{zhu2020safe}. In many of these settings, directly interacting with the environment to collect new data is costly, unsafe, or ethically problematic. This has led to strong interest in \textit{offline} reinforcement learning, where the goal is to learn from a fixed, pre-collected dataset generated by some behavior policy, without any further environment interaction \citep{precup2000eligibility, levine2020offline}.

A central problem in offline RL is off-policy evaluation (OPE), which
estimates a target policy from data generated by another policy. Existing
approaches include value-based, importance-sampling, and doubly robust
methods \citep{le2019batch,liao2021off,liu2018breaking,xie2019towards,
uehara2020minimax,kallus2022doubly}. Multi-step Bellman operators are also
closely related to classical $n$-step TD and $\lambda$-return methods
\citep{sutton1988learning,precup2000eligibility}. Our approach instead mixes
one-step and two-step Bellman residuals within a single minimax OPE objective
and combines them with a critic learned from predicted future features.

A core challenge shared by many of these methods is the Bellman residual minimization (BRM) approach \citep{baird1995residual}. BRM seeks an approximate fixed point of the
Bellman operator \citep{baird1995residual,blackwell1965discounted}, but direct
squared-residual minimization suffers from double sampling and Bellman error
can be poorly aligned with value error
\citep{double_sampling,fujimoto2022itrustyoubellman}. Fixed-point methods
avoid double sampling but can be unstable with function approximation and
off-policy data \citep{tsitsiklis1997analysis,gordon1995stable,
boyan1995generalization}. Minimax approaches instead test the residual against
critic functions, yielding moments that can be estimated directly from
observed transitions \citep{uehara2020minimax,kernelloss}.

A separate difficulty is with regard to the choice of Bellman operator. Most existing OPE methods rely on the standard one-step operator, which contracts with factor $\gamma$. When the value function is restricted to a misspecified function class, as is common in practice, approximation error can be significant. Even when the state is Markov, a restricted value representation may not capture how values evolve across the transition dynamics. The two-step Bellman operator contracts with factor $\gamma^2$ and provides a different approximation target within the same restricted value class. Partial observability can create an additional challenge; we study this only as an empirical robustness setting in Env II rather than as part of the MDP theory. However, estimating the two-step residual from off-policy data requires products of importance weights across consecutive transitions, which can increase variance. This motivates mixing one-step and two-step residuals rather than fixing a single Bellman horizon.

In this paper, we propose \textbf{Mixed-AC}, a minimax OPE framework with two main
components. First, we introduce a mixed Bellman residual that combines
one-step and two-step information through the parameter $\alpha\in[0,1]$ so the corresponding
operator remains a contraction with factor
$(1-\alpha)\gamma+\alpha\gamma^2$. Second, we develop an adaptive critic that
augments current-state features with conditional predictions of one-step and
two-step future features learned from the observed transitions. We combine these components in a tractable minimax estimator,
establish a finite-sample stability bound, and evaluate the method on both
simulated and MetaWorld environments.

\section{Background and Notation}

We model a dynamic treatment regime in the infinite-horizon setting by a Markov decision process (MDP; \citealp{puterman1994mdp}), denoted by $(\mathcal{S}, \mathcal{A}, \mathbf{P}, R, \gamma)$, where $\mathcal{S}$ is the (continuous) state space, $\mathcal{A}$ is the action (or treatment) space, $\mathbf{P}(\cdot | s, a)$ denotes the unknown transition kernel, $R:\mathcal{S}\times\mathcal{A}\to\mathbb{R}$ is the reward function, and $\gamma \in (0,1)$ is the discount factor that balances immediate and future rewards. We assume $(\mathcal{S}, \mathcal{B}(\mathcal{S}))$ and $(\mathcal{A}, \mathcal{B}(\mathcal{A}))$ are measurable spaces, and all transition kernels are defined with respect to appropriate base measures.

A policy $\pi(\cdot|s)$ specifies a distribution over actions at state $s$.
We consider a behavior policy $\pi_b$ that generates the data and a target
policy $\pi$ that we want to evaluate. We assume the batch data consist of $n$
i.i.d. trajectories $\mathcal D_n=\{\mathcal D^i\}_{i=1}^n$, where
\[
\mathcal D^i=\{(S_t^i,A_t^i,R_t^i,S_{t+1}^i)\}_{t=0}^{T}.
\]
Thus each logged trajectory contains $T+1$ transitions; the horizon is finite
in the observed data, while the target value is defined through
infinite-horizon discounting. For a fixed target policy $\pi$, the state-value function is defined as
\begin{equation}
V^{\pi}(s)
=
\mathbb{E}_{\pi}\Bigg[
\sum_{k=0}^{\infty}\gamma^{k} R_{t+k}
\;\Big|\; S_t=s
\Bigg],
\end{equation}
where expectation is taken with respect to the target policy $\pi$ and the true transition kernel $\mathbf{P}$. Our goal is to estimate this value function from batch data $\mathcal{D}$. The value function $V^{\pi}$ satisfies a recursive relationship through the Bellman evaluation operator. Specifically, for any bounded function $V:\mathcal{S}\to\mathbb{R}$, the one-step Bellman operator associated with $\pi$ is defined as
\begin{equation}\label{one-step}
(\mathcal{T}_{\pi}V)(s)
=
\mathbb{E}_{a\sim\pi(\cdot|s),\, s'\sim \mathbf{P}(\cdot|s,a)}
\left[
R(s,a) + \gamma V(s')
\right].
\end{equation}
The operator $\mathcal{T}_{\pi}$ maps a given value function to the expected immediate reward plus the discounted value of the next state under the target policy. It is well known that the true value function $V^{\pi}$ is the unique fixed point of $\mathcal{T}_{\pi}$, i.e., $V^{\pi}=\mathcal{T}_{\pi}V^{\pi}$ \citep{blackwell1965discounted, bertsekas1997nonlinear}.

It is often useful to consider multi-step Bellman operators. In particular, the two-step Bellman operator is defined as the composition of the one-step operator with itself, $\mathcal{T}_{\pi}^{(2)}V \;\equiv\; \mathcal{T}_{\pi}(\mathcal{T}_{\pi}V),$ which can be written explicitly as
\begin{align}\label{two-step}
(\mathcal{T}_{\pi}^{(2)}V)(s)
=
\mathbb{E}_{\substack{a\sim\pi(\cdot|s),\,s'\sim\mathbf{P}(\cdot|s,a),\\
a'\sim\pi(\cdot|s'),\,s''\sim\mathbf{P}(\cdot|s',a')}}
\left[
R(s,a)+\gamma R(s',a')+\gamma^{2} V(s'')
\right],
\end{align}

While the two-step operator $\mathcal{T}_{\pi}^{(2)}$ shares the same fixed point as the one-step operator and both act as contraction mappings under the supremum norm, $\mathcal{T}_{\pi}^{(2)}$ enjoys a stronger contraction factor of $\gamma^{2}$ compared to $\gamma$ for $\mathcal{T}_{\pi}$.

\section{Methodology}\label{sec:method}

We first introduce the mixed Bellman residual, then formulate the minimax
objective and the adaptive critic class, and finally in Section~\ref{sec:estimation}, we describe the empirical estimator.

\subsection{Mixed Bellman Residual}\label{sec:mixed_residual}

Off-policy value estimation with function approximation is challenging for
several distinct reasons. Semi-gradient fixed-point methods can be unstable
under the combination of function approximation, bootstrapping, and
off-policy data, while direct Bellman residual minimization avoids this
particular divergence mechanism but suffers from double sampling and from
the imperfect relationship between Bellman residual and value error
\citep{sutton2018reinforcement,baird1995residual,
fujimoto2022itrustyoubellman}.

One way to use more trajectory information is through multi-step residuals.
Within a Markov model, the two-step operator provides a different
approximation target when the value function is restricted to a finite or
misspecified class. It can therefore capture transition information that is
not well represented by a one-step approximation. This comes at a cost:
two-step off-policy estimation requires products of importance weights and
uses more trajectory noise, which can increase variance.

We therefore combine one-step and two-step Bellman residuals. For any value function $V:\mathcal{S}\to\mathbb{R}$, define
$\delta_V^{(1)} \equiv \mathcal{T}_{\pi}V - V$ and $\delta_V^{(2)} \equiv \mathcal{T}_{\pi}^{(2)}V - V$ and the mixed residual
\begin{equation}
     \delta_V^{\alpha} \equiv (1-\alpha)\, \delta_V^{(1)} + \alpha\, \delta_V^{(2)},
\end{equation}
where $0 \leq \alpha \leq 1$. The parameter $\alpha$ indexes a family of OPE estimators that interpolate between one-step and two-step Bellman residual minimization. Different values of $\alpha$ place different weights on the two residuals, and the preferred choice depends on the underlying dynamics, function class, and available data. Selecting the optimal member of this family using only offline data is itself a challenging model-selection problem, shared by many OPE procedures \citep{tang2021modelselection,liu2025opeselection}. In this work, we primarily focus on the study of the behavior of this estimator family across a range of values of $\alpha$.

We focus on one- and two-step residuals as the simplest extension beyond the
standard one-step operator. This introduces additional trajectory information
while limiting importance weighting to products of two ratios and requiring
only two auxiliary conditional-feature predictors; longer-horizon mixtures
are possible but are not studied here.

In practice, the value function is estimated within a restricted function class, which introduces approximation error. One aspect of this error is governed by the contraction behavior of the Bellman operator. The one-step operator $\T_\pi$ contracts with factor $\gamma$, while the two-step operator $\T_\pi^{(2)}$ contracts with factor $\gamma^2$. Define the mixed Bellman operator
\begin{equation}\label{mixedop}
\T_\pi^{\alpha}
=
(1-\alpha)\,\T_\pi + \alpha\,\T_\pi^{(2)},
\quad \alpha \in [0,1].
\end{equation}

The following result establishes the contraction property of the mixed operator.

\begin{proposition}[Contraction Property]\label{contraction}
Let $\T_\pi$ and $\T_\pi^{(2)}$ denote the corresponding one-step and two-step Bellman operators associated with a fixed policy $\pi$, and let $\T_\pi^{\alpha}$ be defined in \eqref{mixedop}. 
Then for any bounded value functions $V_1,V_2:\mathcal{S}\to\mathbb{R}$,
\begin{equation}
\|\T_\pi^{\alpha}V_1 - \T_\pi^{\alpha}V_2\|_\infty
\le
((1-\alpha)\gamma + \alpha\gamma^2)\,
\|V_1 - V_2\|_\infty.
\end{equation}
In particular, if $\alpha > 0$ and $0<\gamma<1$, then $
(1-\alpha)\gamma + \alpha\gamma^2 < \gamma,
$ so $\T_\pi^{\alpha}$ is a contraction with a strictly smaller Lipschitz constant than $\T_\pi$.
\end{proposition}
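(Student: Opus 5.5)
The plan is to prove the Lipschitz bound separately for $\T_\pi$ and for $\T_\pi^{(2)}$, and then combine the two bounds by linearity of the convex combination in \eqref{mixedop} together with the triangle inequality for $\|\cdot\|_\infty$.

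\textbf{Step 1 (one-step operator).} From \eqref{one-step}, the reward terms cancel in the difference, so
\[
(\T_\pi V_1-\T_\pi V_2)(s)=\gamma\,\mathbb{E}_{a\sim\pi(\cdot|s),\,s'\sim\mathbf P(\cdot|s,a)}\big[V_1(s')-V_2(s')\big].
\]
The integrating measure is a probability measure, so its absolute value is at most $\gamma\|V_1-V_2\|_\infty$ for every $s$. Taking the supremum over $s$ gives $\|\T_\pi V_1-\T_\pi V_2\|_\infty\le\gamma\|V_1-V_2\|_\infty$.

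\textbf{Step 2 (two-step operator).} Since $\T_\pi^{(2)}=\T_\pi\circ\T_\pi$, applying Step 1 twice gives the factor $\gamma^2$. This requires $\T_\pi V$ to be bounded whenever $V$ is bounded, which holds under boundedness of $R$. That boundedness is implicit in the paper's setting and I would state it. Alternatively, one can read the bound off \eqref{two-step} directly: the reward terms cancel and only $\gamma^2\mathbb{E}[V_1(s'')-V_2(s'')]$ remains.

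\textbf{Step 3 (combine).} By linearity,
\[
\T_\pi^{\alpha}V_1-\T_\pi^{\alpha}V_2=(1-\alpha)(\T_\pi V_1-\T_\pi V_2)+\alpha(\T_\pi^{(2)}V_1-\T_\pi^{(2)}V_2).
\]
Applying the triangle inequality with $\alpha,1-\alpha\ge0$ yields the claimed constant $(1-\alpha)\gamma+\alpha\gamma^2$.

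\textbf{Step 4 (strict comparison).} Observe that
\[
\gamma-\big[(1-\alpha)\gamma+\alpha\gamma^2\big]=\alpha\gamma(1-\gamma),
\]
which is strictly positive when $\alpha>0$ and $0<\gamma<1$. The constant is also below $1$, so $\T_\pi^\alpha$ is a contraction.

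No step is a genuine obstacle. The only point needing care is measurability and boundedness, so that the expectations are well defined and $\T_\pi$ maps bounded functions to bounded functions. I would handle this by assuming $R$ is bounded and measurable.
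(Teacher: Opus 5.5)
Your proposal is correct and follows essentially the same route as the paper's proof. Both arguments split $\T_\pi^{\alpha}V_1-\T_\pi^{\alpha}V_2$ by linearity, apply the triangle inequality, and use the $\gamma$- and $\gamma^2$-contraction of $\T_\pi$ and $\T_\pi^{(2)}=\T_\pi\circ\T_\pi$, with the strict inequality coming from the same algebra ($\gamma(1-\alpha+\alpha\gamma)<\gamma$ in the paper, $\alpha\gamma(1-\gamma)>0$ in yours); the only difference is that you verify the one-step contraction explicitly and flag the bounded-reward requirement, whereas the paper takes these standard facts for granted.
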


The proof follows from the triangle inequality and is provided in Appendix~\ref{contractionproof}. Let
$\gamma_\alpha=(1-\alpha)\gamma+\alpha\gamma^2$. Since $V^\pi$ is the fixed point of $\T_\pi^\alpha$, contraction gives the deterministic residual bound
\begin{equation}
\|V-V^\pi\|_\infty
\leq
\frac{1}{1-\gamma_\alpha}
\|\T_\pi^\alpha V-V\|_\infty
=
\frac{1}{(1-\gamma)(1+\alpha\gamma)}
\|\delta_V^\alpha\|_\infty.
\end{equation}
Thus, at the population sup-norm level, increasing $\alpha$ reduces the
worst-case amplification of mixed residual error. This does not by itself
guarantee lower statistical error for the restricted minimax estimator,
because the critic class, value class, data distribution, and multi-step
importance-weight variability also matter.

To evaluate a candidate value function, let $\mu_b$ denote the distribution
of current states in the logged data. For pooled finite-horizon trajectories,
this is the mixture of the behavior-state distributions over the included
time points. For a measurable function $h$, define
\begin{equation}
    \mathcal{L}(V,h)
    =
    \mathbb{E}_{S\sim\mu_b}
    \left[h(S)\delta_V^\alpha(S)\right].
\end{equation} 
In the next section, we study how this loss can be used to characterize the target value function and motivate a minimax formulation based on an adaptive choice of critic functions.

\subsection{Minimax Formulation}\label{sec:minimax}

For a given function $h$, we have defined a loss that measures the mixed Bellman residual of $V$ weighted by $h$. A natural question is why we consider a critic-weighted Bellman residual rather than the squared Bellman error. The reason is that the squared Bellman error involves an expectation of the form
$\mathbb{E}[(\mathbb{E}[R(s,a)+\gamma V(s')-V(s)|s,a])^2]$,
where the inner expectation is taken with respect to the unknown transition kernel $\mathbf{P}(\cdot|s,a)$. In offline settings, this inner expectation cannot be evaluated exactly and must be approximated from data, which leads to the \textit{double sampling} issue \citep{double_sampling, wang2014stochasticcompositionalgradientdescent}. It is also well known that the Bellman error is a poor replacement for value error and its magnitude hides bias \citep{fujimoto2022itrustyoubellman}. By contrast, the critic-weighted loss is linear, since the Bellman residual appears inside a single expectation and can be estimated directly from observed transition samples.

We first note the population identification property that motivates the
minimax objective. This result provides the foundation for the critic
formulation used below.

\begin{assumption}[Bounded rewards]\label{ass:bounded_reward}
There exists $R_{\max}>0$ such that
$
|R(s,a)|\le R_{\max}
$
for all $(s,a)\in\mathcal{S}\times\mathcal{A}$.
\end{assumption}

Together with $0<\gamma<1$, Assumption~\ref{ass:bounded_reward} implies that
$V^\pi$ is bounded.

\begin{theorem}[Identification]\label{thm:identification}
Suppose Assumption~\ref{ass:bounded_reward} holds. Let \(V\) be a bounded
measurable value function such that
\(\delta_V^\alpha \in L^1(\mu_b)\). Then the following statements are equivalent:
\begin{itemize}
\item[(i)] \(\mathcal{L}(V,h)=0\) for every bounded measurable critic function \(h\).

\item[(ii)] \(\delta_V^\alpha(s)=0\) for \(\mu_b\)-almost every
\(s\in\mathcal{S}\).
\end{itemize}
In particular, $V^\pi$ satisfies
$\mathcal{L}(V^\pi,h)=0$ for every bounded measurable critic function $h$.
Moreover, if $\delta_V^\alpha$ is continuous and $\mu_b$ has full
support on $\mathcal S$, then $\mathcal{L}(V,h)=0$ for every bounded
measurable $h$ implies $V=V^\pi$.
\end{theorem}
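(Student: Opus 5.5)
The plan is to read (i)$\Leftrightarrow$(ii) as the fundamental lemma of integration against bounded test functions in $L^1(\mu_b)$, and then obtain the remaining claims from the fixed-point property and the contraction result, Proposition~\ref{contraction}.

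For (ii)$\Rightarrow$(i), suppose $\delta_V^\alpha=0$ $\mu_b$-a.e. Then $h\delta_V^\alpha=0$ $\mu_b$-a.e. for every bounded measurable $h$. Because $|h\delta_V^\alpha|\le \|h\|_\infty|\delta_V^\alpha|\in L^1(\mu_b)$, the expectation defining $\mathcal{L}(V,h)$ is well defined, and it equals zero.

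For (i)$\Rightarrow$(ii), I will choose a specific critic, $h=\mathrm{sign}(\delta_V^\alpha)$. It is measurable because $\delta_V^\alpha$ is: it is built from $V$, $R$ and the kernels $\mathbf{P}$, $\pi$ by integration, and measurability is preserved under integration against kernels. It is also bounded by $1$. With this choice, $0=\mathcal{L}(V,h)=\mathbb{E}_{\mu_b}|\delta_V^\alpha(S)|$, so $\delta_V^\alpha=0$ $\mu_b$-a.e. An equivalent route is to take $h=\mathbbm{1}\{\delta_V^\alpha>0\}$ and $h=\mathbbm{1}\{\delta_V^\alpha<0\}$ separately.

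To show that $V^\pi$ satisfies (i), I will verify (ii) and apply the equivalence. Assumption~\ref{ass:bounded_reward} gives $\|V^\pi\|_\infty\le R_{\max}/(1-\gamma)$, so $V^\pi$ is bounded. Moreover $\T_\pi V^\pi=V^\pi$, and hence $\T_\pi^{(2)}V^\pi=\T_\pi(\T_\pi V^\pi)=V^\pi$. Therefore $\delta_V^\alpha$ vanishes identically at $V=V^\pi$, which gives $\delta_{V^\pi}^\alpha=0$ everywhere and trivially in $L^1(\mu_b)$.

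For the final claim, assume $\delta_V^\alpha$ is continuous and $\mu_b$ has full support. From (ii), the set $U=\{s:\delta_V^\alpha(s)\neq0\}$ is open by continuity and has $\mu_b(U)=0$. Full support means every nonempty open set has positive measure, so $U=\emptyset$. Hence $\T_\pi^\alpha V=V$ everywhere. Since $V$ is bounded, Proposition~\ref{contraction} shows that $\T_\pi^\alpha$ is a $\gamma_\alpha$-contraction on bounded functions with $\gamma_\alpha<1$, and $V^\pi$ is also a fixed point. Then $\|V-V^\pi\|_\infty\le\gamma_\alpha\|V-V^\pi\|_\infty$, which forces $V=V^\pi$. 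Equivalently, this is the residual bound displayed after Proposition~\ref{contraction} with zero residual.

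The main obstacle is only technical: the full-support step requires $\mathcal S$ to carry a topology compatible with $\mathcal B(\mathcal S)$, so that "full support" and "continuity" have meaning. I will state this convention explicitly, namely a topological (e.g. separable metric) state space with its Borel $\sigma$-algebra. I will also check that $\delta_V^\alpha$ is measurable, which is what makes the sign critic admissible.
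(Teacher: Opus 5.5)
Your proposal is correct and follows the paper's proof step for step. The paper tests against the indicators $\mathbf{1}\{\delta_V^\alpha>0\}$ and $\mathbf{1}\{\delta_V^\alpha<0\}$, which is equivalent to your sign critic. It then derives $\delta_{V^\pi}^\alpha\equiv 0$ from $\mathcal{T}_\pi V^\pi=V^\pi$, and concludes using continuity, full support, and uniqueness of the bounded fixed point of $\mathcal{T}_\pi^\alpha$ from Proposition~\ref{contraction}; your remarks on the measurability of $\delta_V^\alpha$ and on the topology needed for ``full support'' add details the paper leaves implicit.
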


Theorem~\ref{thm:identification} gives the population justification for
testing the mixed Bellman residual against critic functions. It uses the
full class of bounded critics, while the estimator below uses a smaller
data-dependent class.

We now introduce the minimax framework
\citep{uehara2020minimax,duan,jiang,lihong}. For the batch of $n$ independent trajectories defined above, Theorem~\ref{thm:identification} gives
\begin{equation*}
\mathbb{E}_{S \sim \mu_b}\!\left[\,h(S)\,\delta_{V^\pi}^{\alpha}(S)\,\right] = 0,
\end{equation*}
where $V^\pi$ is the true value function and $h$ is any bounded measurable function. This motivates the minimax formulation
\begin{equation}
\argmin_{V \in \mathcal{V}}
\max_{h \in \mathcal{F}}
\;\mathcal{L}^2(V,h),
\end{equation}
where $\mathcal V$ and $\mathcal F$ denote the value and critic classes,
respectively. The remaining question is how to choose a critic class that is both rich enough to detect Bellman violations and structured enough to produce a tractable estimator. In the next section, we introduce an adaptive critic class learned from the transition structure of the data.

\subsection{Adaptive Critic Representation}
\label{sec:adaptive_critic}

A key observation is that although the population mixed Bellman residual
$\delta_V^{\alpha}(s)$ is a function of the current state, its empirical
estimation uses rewards and future states observed along the trajectory.
In particular, the one-step and two-step terms involve value evaluations at
$S_i'$ and $S_i''$, while off-policy estimation additionally uses importance
weights over the observed actions. Existing minimax OPE methods often restrict
the critic to a reproducing kernel Hilbert space (RKHS) generated by a fixed
universal kernel on the \emph{current} state space \citep{kernelloss}. While
tractable, this uses the same critic representation across different MDPs. We
instead construct an \emph{adaptive} critic from current-state predictors of
future transition features.

Let $\phi:\mathcal{S}\to\mathcal{H}$ be a fixed feature map on the state
space, where $\mathcal{H}$ is a Hilbert space. Let $\mathbb{E}_b$ denote
expectation under the behavior trajectory distribution induced by the behavior
policy $\pi_b$. Define the one-step and two-step conditional feature means as
\[
m_1^b(s)
=
\mathbb{E}_b\!\left[\phi(S_{t+1})|S_t=s\right],
\qquad
m_2^b(s)
=
\mathbb{E}_b\!\left[\phi(S_{t+2})|S_t=s\right].
\]
These quantities summarize the future feature directions observed under the
logging process as functions of the current state. The critic feature at
state $s$ is then
\begin{equation}
\label{eq:pop_critic}
z(s)
=
\begin{pmatrix}
\phi(s) &
\gamma\,m_1^b(s) &
\gamma^2\,m_2^b(s)
\end{pmatrix}^\top
\in\mathcal{H}^3.
\end{equation}

We use conditional feature means under the behavior distribution because the critic is constructed from the transition information available in the observed trajectories. This gives a simple critic that is consistent with the observed data and depends only on the current state. Using behavior-conditioned predictions also avoids introducing additional importance weighting into critic construction. Under substantial policy mismatch, however, these predictive directions need not coincide with those most relevant under the target-policy dynamics.

Other choices are also possible. For example, one could reweight the
conditional expectations to construct features under the target
policy or another distribution. The required weighting would depend on the
prediction horizon, with different adjustments for the one-step and two-step
terms. We therefore use the simpler behavior-policy construction throughout.
Target-policy correction enters separately through the importance ratios in
the Bellman residual introduced in Section~\ref{sec:estimation}.

Define the critic class
\[
\mathcal{C}
=
\left\{
h_\beta(\cdot)
=
\langle\beta,z(\cdot)\rangle_{\mathcal{H}^3}
:
\beta\in\mathcal{H}^3
\right\}.
\]
Each critic $h_\beta$ is scalar-valued since it is given by an inner product
in $\mathcal{H}^3$. Writing
$\beta=(\beta_0,\beta_1,\beta_2)$ with $\beta_j\in\mathcal{H}$, the critic
can assign separate weights to the current-state, one-step predictive, and
two-step predictive components.

Identification with this restricted critic class depends on its richness
relative to the chosen value-function class. In particular, the critic class
must be rich enough to distinguish nonzero Bellman residuals generated by
the value functions under consideration. Similar richness or completeness
conditions are standard in minimax and conditional-moment methods
\citep{uehara2020minimax,dikkala2020minimax,chen2022well}.

\begin{assumption}[Bounded feature map]
\label{ass:bounded_feature}
The feature map $\phi$ is measurable and uniformly bounded in norm. That is,
there exists a constant $B_\phi>0$ such that $
\sup_{s\in\mathcal{S}}
\|\phi(s)\|_{\mathcal{H}}
\leq B_\phi.
$
\end{assumption}

Assumption~\ref{ass:bounded_feature} guarantees that the conditional feature
means in \eqref{eq:pop_critic} are well defined and that $z$ is bounded.
The restricted critic enforces the moment condition
$
\mathbb{E}_{S\sim \mu_b}
[z(S)\delta_V^\alpha(S)]=0,
$
thereby testing the residual along both current-state and predictive feature
directions. 

These richness conditions are important because Bellman-residual criteria
need not be well aligned with value error in general
\citep{fujimoto2022itrustyoubellman}. Mixed-AC does not remove this issue
automatically: with a restricted critic class, small tested moments need not
imply small value error without sufficient richness. The adaptive critic is
intended to make the tested directions more informative, rather than to
provide a universal equivalence between Bellman residual and value error.

The adaptive critic should be viewed as a data-dependent choice of
test-function class rather than a unique characterization of the Bellman
residual. In general, identification depends on whether the chosen critic
space is sufficiently rich for the value-function class under consideration.
The conditional-feature representation adopted here provides one practical
construction that adapts to the transition structure observed in the data, while alternative choices, such as kernel conditional mean embeddings
or model-based transition predictors, could also be incorporated within the
same minimax framework.

When $\phi$ is an RKHS feature map, these are conditional mean embeddings
\citep{song}. Rather than solving kernel Gram-matrix systems which can be computationally expensive
for large datasets, we estimate them
by supervised neural regression on an independent critic-construction sample.

Specifically, let $\mathcal{D}_{\mathrm{crit}}$ denote the
critic-construction sample. We fit two predictive models
$
\widehat m_1, \widehat m_2:\mathcal{S}\to\mathcal{H},
$
using $\mathcal{D}_{\mathrm{crit}}$, where $\widehat m_1$ predicts
$\phi(S')$ from $\phi(S)$ and $\widehat m_2$ predicts $\phi(S'')$ from $\phi(S)$. These
models can be fit using any suitable regression method. In our implementation,
we use neural networks whose input is the current feature vector $\phi(S)$ for faster computations. The fitted predictors are
then frozen, and the value function is estimated on an independent fitting
sample.

\paragraph{Sample Splitting.} To ensure that the empirical critic remains a
function of the current state only, the data are divided into two disjoint
subsets. A critic-construction sample
$\mathcal D_n^{\mathrm{crit}}$
is used to estimate
$\widehat m_1$
and
$\widehat m_2$.
These predictors are then frozen. The value function is subsequently
estimated on an independent fitting sample
$\mathcal D_n^{\mathrm{fit}}$,
where the critic feature
$\widehat z(S)$
is evaluated only through the current state. Consequently, conditional on
$\mathcal D_n^{\mathrm{crit}}$,
the learned critic is fixed throughout the value-estimation step.

For an observation $i$ in the independent value-fitting sample, define the
estimated critic feature
$
\widehat Z_i
=
\widehat z(S_i)
=
\begin{pmatrix}
\phi(S_i) &
\gamma\,\widehat m_1(S_i) &
\gamma^2\,\widehat m_2(S_i)
\end{pmatrix}^\top
\in\mathcal{H}^3.$
Thus, $\widehat Z_i$ depends on the fitting observation only through its
current state $S_i$. The future-state observations used to train
$\widehat m_1$ and $\widehat m_2$ come from the separate
critic-construction sample.

Let
$
r_\alpha(V)
=
\widehat{\delta_V^\alpha}(\mathbf{S})
=
\left(
\widehat{\delta_V^\alpha}(S_1),
\dots,
\widehat{\delta_V^\alpha}(S_n)
\right)^\top
\in\mathbb{R}^n
$
denote the empirical mixed residual vector, and let
$\widehat Z:\mathcal{H}^3\to\mathbb{R}^n$ be the empirical critic operator
defined by
$(\widehat Z\beta)_i
=
\langle\widehat Z_i,\beta\rangle_{\mathcal{H}^3}.
$
The finite-sample minimax problem becomes
\begin{equation}
\label{eq:minimax_adaptive}
\min_{V\in\mathcal{V}}
\left[
\max_{\beta\in\mathcal{H}^3}
\left\{
\frac{1}{n}
\left\langle
\beta,
\widehat Z^*r_\alpha(V)
\right\rangle_{\mathcal{H}^3}
-
\lambda_h
\|\beta\|_{\mathcal{H}^3}^2
\right\}
+
\lambda_V\Omega(V)
\right].
\end{equation}
where
$\widehat Z^*:\mathbb{R}^n\to\mathcal{H}^3$ is the adjoint operator
$
\widehat Z^*y
=
\sum_{i=1}^n y_i\widehat Z_i,
$ and $\lambda_h>0$ is a critic regularization parameter.

For fixed $V$, the inner maximization has a closed-form solution, leading to
the objective
\begin{equation}
\label{eq:adaptive_objective}
\widehat J(V)
=
\frac{1}{4\lambda_h n^2}
r_\alpha(V)^\top
\widehat{\mathbf K}_n
r_\alpha(V)
+
\lambda_V\Omega(V),
\end{equation}
where $\Omega(V)$ is a regularization term on the value-function class. The
derivation of \eqref{eq:adaptive_objective} is given in
Appendix~\ref{app:closed_form_inner}.

The matrix
$
\widehat{\mathbf K}_n
=
\widehat Z\widehat Z^*
$
is the kernel matrix induced by the learned critic representation, with
$
(\widehat{\mathbf K}_n)_{ij}
=
\langle
\widehat z(S_i),
\widehat z(S_j)
\rangle_{\mathcal{H}^3}.
$ It is a data-dependent kernel on the current state that incorporates learned
one-step and two-step predictive transition features. Thus, unlike a fixed
kernel such as
$
k_\sigma(s,\widetilde s)
=
\exp\!\left(
-\frac{\|s-\widetilde s\|^2}{2\sigma^2}
\right),
$
which measures similarity using the current states alone, the proposed kernel
also compares how those current states predict future feature directions
under the observed dynamics.

With finite-dimensional features, we evaluate the empirical moment directly
without forming the $n\times n$ Gram matrix; for fixed feature dimension this
is linear in the number of fitting tuples.

\subsection{Estimation}\label{sec:estimation}

We now describe how the proposed framework can be implemented
using finite samples. Recall that the population objective
depends on the mixed Bellman residual. Since $\mathbf{P}$ is
unknown, we estimate the residuals from trajectories generated
by the behavior policy $\pi_b$.

\begin{assumption}[Policy overlap]
\label{ass:behavior_support}
For every state $s$, the target policy $\pi(\cdot|s)$ is absolutely
continuous with respect to the behavior policy $\pi_b(\cdot|s)$. Equivalently,
in the discrete-action case,
$
\pi_b(a|s)>0
$
whenever
$
\pi(a|s)>0.
$
\end{assumption}

Assumption~\ref{ass:behavior_support} ensures that the importance ratio
$\pi(a|s)/\pi_b(a|s)$ is well defined on the support of the target policy.
For continuous action spaces, the ratio is understood in terms of densities
with respect to a common reference measure.

We use per-decision importance sampling. For a fitting tuple
$O_i=(S_i,A_i,R_i,S_i',A_i',R_i',S_i'')$, define
\[
\rho_{0,i}=\frac{\pi(A_i|S_i)}{\pi_b(A_i|S_i)},
\qquad
\rho_{1,i}=\frac{\pi(A_i'|S_i')}{\pi_b(A_i'|S_i')},
\qquad
\rho_{01,i}=\rho_{0,i}\rho_{1,i}.
\]
The one-step and two-step residual estimators are
\begin{align}
\widehat{\delta_V^{(1)}}(S_i)
&=
\rho_{0,i}\Big(R_i+\gamma V(S_i')\Big)-V(S_i),\\
\widehat{\delta_V^{(2)}}(S_i)
&=
\rho_{0,i}R_i
+
\gamma\rho_{01,i}R_i'
+
\gamma^2\rho_{01,i}V(S_i'')
-
V(S_i).
\end{align}
The empirical mixed residual is
$
\widehat{\delta_V^{\alpha}}(\mathbf S)
=
(1-\alpha)\widehat{\delta_V^{(1)}}(\mathbf S)
+
\alpha\widehat{\delta_V^{(2)}}(\mathbf S).
$

Let $r_{\alpha,V}(O_i)$ denote the resulting importance-weighted mixed
residual for the observed tuple $O_i$. Under exact importance ratios,
\[
\mathbb{E}_{\pi_b}
\left[
r_{\alpha,V}(O_i)|S_i=s
\right]
=
\delta_V^\alpha(s).
\]
Thus, the empirical residual is conditionally unbiased for the population
target-policy mixed Bellman residual. The critic representation introduced in
Section~\ref{sec:adaptive_critic}
is first learned on the critic-construction sample and then kept fixed.
The empirical Bellman residual is evaluated on the independent fitting
sample. This separation prevents the value-fitting stage from reusing
the data employed to construct the adaptive critic.

\paragraph{Finite-Sample Stability.}
The following result analyzes the fitting objective under independent fitting
tuples and exact, unnormalized importance ratios.
When $\phi:\mathcal{S}\to\mathbb{R}^d$ is finite-dimensional,
we obtain a finite-sample stability result conditional on the
critic-construction sample. Let
$
O_i
=
(S_i,A_i,R_i,S_i',A_i',R_i',S_i'')
$
denote one fitting tuple, and consider the linear value class
$V_\theta(s)=\theta^\top\phi(s)$. Define
$
\widehat m(\theta)
=
\frac{1}{n_t}
\sum_{i=1}^{n_t}
\widehat z(S_i)r_\alpha(\theta;O_i),
$
and
$
m_{\mathrm{crit}}(\theta)
=
\mathbb{E}
\left[
\widehat z(S)r_\alpha(\theta;O)
\mid
\mathcal{D}_n^{\mathrm{crit}}
\right].
$
Multiplying \eqref{eq:adaptive_objective} by $4\lambda_h$ does not change its
minimizer. Set $\lambda=4\lambda_h\lambda_V$ and consider the equivalent
objective below. Let
\[
\widehat J(\theta)
=
\|\widehat m(\theta)\|_2^2
+
\lambda\|\theta\|_2^2,
\qquad
J_{\mathrm{crit}}(\theta)
=
\|m_{\mathrm{crit}}(\theta)\|_2^2
+
\lambda\|\theta\|_2^2.
\]

For the finite-sample result below, we also assume that, conditional on
$\mathcal{D}_n^{\mathrm{crit}}$, the fitted conditional-mean predictors are uniformly bounded.

\begin{assumption}[Bounded estimated critic]
\label{ass:bounded_estimated_critic}
Conditional on $\mathcal{D}_n^{\mathrm{crit}}$, there exist finite
constants $B_{m,1}$ and $B_{m,2}$ such that
\[
\sup_{s\in\mathcal{S}}
\|\widehat m_1(s)\|_2
\le B_{m,1}, \quad
\sup_{s\in\mathcal{S}}
\|\widehat m_2(s)\|_2
\le B_{m,2}.\]
\end{assumption}

We next give a finite-sample result for the estimator obtained after the
adaptive critic has been learned on the independent critic-construction
sample. The result controls the empirical objective relative to its
population counterpart conditional on the fitted critic.

\begin{proposition}[Finite-Sample Stability]
\label{prop:finite_sample_stability}
Consider the fixed linear class
$V_\theta(s)=\theta^\top\phi(s)$, with
$\Theta=\{\theta:\|\theta\|_2\le R_\theta\}$.
Suppose Assumptions~\ref{ass:bounded_reward}--\ref{ass:bounded_estimated_critic} hold, and assume additionally that
$\rho_0\le\bar\rho$ and $\rho_{01}\le\bar\rho_2$ almost surely. Define
$\ell_{d,\delta}=\log(8d/\delta)$.
If
$
\widehat\theta
\in
\argmin_{\theta\in\Theta}
\widehat J(\theta),
$
then, conditional on $\mathcal{D}_n^{\mathrm{crit}}$, there exists a
universal constant $C>0$ such that, with probability at least
$1-\delta$ over the fitting sample,
\[
J_{\mathrm{crit}}(\widehat\theta)
\le
\inf_{\theta\in\Theta}
J_{\mathrm{crit}}(\theta)
+
CB_Z^2B_r^2
\left\{
\sqrt{\frac{\ell_{d,\delta}}{n_t}}
+
\frac{\ell_{d,\delta}}{n_t}
\right\},
\]
where
$
B_Z
=
\left(
B_\phi^2
+
\gamma^2B_{m,1}^2
+
\gamma^4B_{m,2}^2
\right)^{1/2},
$
and
$
B_r
=
\bar\rho R_{\max}
+
\alpha\gamma\bar\rho_2R_{\max}
+
R_\theta
[
(1-\alpha)(1+\gamma\bar\rho)B_\phi$ $
+
\alpha(1+\gamma^2\bar\rho_2)B_\phi
].
$
\end{proposition}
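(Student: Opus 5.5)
The argument is a standard ERM-type comparison: both objectives are quadratic in $\theta$ through an affine moment map, so it suffices to control the uniform deviation $\sup_{\theta\in\Theta}|\widehat J(\theta)-J_{\mathrm{crit}}(\theta)|$ and then use the basic inequality. Throughout I condition on $\mathcal D_n^{\mathrm{crit}}$, so that $\widehat z$ is a fixed bounded map and the fitting tuples $O_1,\dots,O_{n_t}$ are i.i.d. The residual is affine in $\theta$: $r_\alpha(\theta;O)=a(O)+b(O)^\top\theta$, where $a(O)=\rho_0R+\alpha\gamma\rho_{01}R'$ collects the reward terms (the $\rho_0R$ term appears with total weight $(1-\alpha)+\alpha=1$), and
\[
b(O)=(1-\alpha)\bigl(\gamma\rho_0\phi(S')-\phi(S)\bigr)+\alpha\bigl(\gamma^2\rho_{01}\phi(S'')-\phi(S)\bigr).
\]
Hence $\widehat m(\theta)=\widehat u+\widehat G\theta$ with $\widehat u=n_t^{-1}\sum_i\widehat z(S_i)a(O_i)$ and $\widehat G=n_t^{-1}\sum_i\widehat z(S_i)b(O_i)^\top$. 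Their conditional means $u$ and $G$ give $m_{\mathrm{crit}}(\theta)=u+G\theta$.

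The first step is deterministic bounds. By Assumptions~\ref{ass:bounded_feature} and~\ref{ass:bounded_estimated_critic}, $\|\widehat z(s)\|_2\le B_Z$. By Assumption~\ref{ass:bounded_reward} and the almost sure ratio bounds, $|a(O)|+R_\theta\|b(O)\|_2\le B_r$, so $|r_\alpha(\theta;O)|\le B_r$ for all $\theta\in\Theta$. The constant $B_r$ in the statement is exactly this triangle-inequality bound, which suggests the authors' route. Consequently $\|\widehat m(\theta)\|_2$ and $\|m_{\mathrm{crit}}(\theta)\|_2$ are both at most $B_Z B_r$ on $\Theta$.

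The second step, which I expect to be the main point, is the concentration. Define the per-sample error
\[
\Delta=\bigl(\widehat u-u\bigr)+\bigl(\widehat G-G\bigr)\theta .
\]
I would bound $\sup_{\theta\in\Theta}\|\Delta\|_2\le\|\widehat u-u\|_2+R_\theta\|\widehat G-G\|_{\mathrm{op}}$. The terms $\widehat u-u$ and $\widehat G-G$ are averages of independent, centered, bounded random vectors and matrices: the summands have norm at most $2B_ZR_{\max}(\bar\rho+\alpha\gamma\bar\rho_2)$ and $2B_Z\|b\|_2$ respectively. I would apply the matrix Bernstein inequality to the rectangular $\widehat z\,b^\top$ terms (of dimension $3d\times d$) and vector Bernstein to $\widehat u$. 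The dimension factors and the union of the two events, each at level $\delta/2$, produce the $\log(8d/\delta)$; the $3d$ versus $d$ dimension discrepancy is absorbed into $C$. This yields, with probability at least $1-\delta$,
\[
\sup_{\theta\in\Theta}\|\widehat m(\theta)-m_{\mathrm{crit}}(\theta)\|_2\le C'B_ZB_r\Bigl(\sqrt{\ell_{d,\delta}/n_t}+\ell_{d,\delta}/n_t\Bigr).
\]
Using the variance proxy $B_Z^2B_r^2$ and the range $B_ZB_r$ gives exactly these two terms.

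The final step converts this into the objective bound. Let $\varepsilon$ denote the right-hand side above. Since $\bigl|\|x\|^2-\|y\|^2\bigr|\le\|x-y\|\,(\|x\|+\|y\|)\le 2B_ZB_r\,\varepsilon$ on $\Theta$, and the penalty $\lambda\|\theta\|_2^2$ cancels, we get $\sup_\Theta|\widehat J-J_{\mathrm{crit}}|\le 2B_ZB_r\varepsilon$. Now take $\theta^\star$ to be any (near-)minimizer of $J_{\mathrm{crit}}$ on the compact set $\Theta$; the infimum is attained by continuity. Then
\[
J_{\mathrm{crit}}(\widehat\theta)\le\widehat J(\widehat\theta)+2B_ZB_r\varepsilon\le\widehat J(\theta^\star)+2B_ZB_r\varepsilon\le J_{\mathrm{crit}}(\theta^\star)+4B_ZB_r\varepsilon,
\]
which is the claim with $C=4C'$. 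The only delicate point is making sure the Bernstein constants scale as $B_ZB_r$ rather than picking up extra factors. This works because every summand is a product of $\widehat z$, which is bounded by $B_Z$, and a piece of the residual, which is bounded by $B_r$ uniformly over $\Theta$.
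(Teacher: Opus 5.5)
Your proposal is correct and follows essentially the same route as the paper's proof. Both arguments use the affine decomposition of $r_\alpha(\theta;O)$, separate vector and rectangular matrix Bernstein bounds at level $\delta/2$ each (for the $3d\times d$ case the dimension factor is exactly $4d$, which yields $\log(8d/\delta)$ directly, with nothing to absorb), the conversion $\bigl|\|x\|_2^2-\|y\|_2^2\bigr|\le 2B_ZB_r\|x-y\|_2$, and the basic ERM inequality. The only slip is cosmetic: the centered slope summands should be bounded by $2B_Z\sup_O\|b(O)\|_2$, a deterministic constant, rather than by $2B_Z\|b\|_2$.
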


Proposition~\ref{prop:finite_sample_stability} gives an objective-level
finite-sample guarantee for the second-stage estimator conditional on the
learned critic. It assumes independent fitting tuples and exact bounded ratios,
whereas the experiments also use trajectories, estimated ratios, and
self-normalization. An end-to-end value-error bound would additionally require
restricted-class identification and control of first-stage critic estimation.

\section{Experiments}\label{sec:experiments}

We evaluate \textbf{Mixed-AC} (mixed residual with adaptive critic) on two simulated environments and four continuous-control benchmarks from MetaWorld~\citep{mclean2025metaworld}. In all experiments, the discount factor is $\gamma=0.9$. We evaluate
$\widehat V^\pi(s)$ on a fixed set of test states and report results over
$100$ independent repetitions. Hyperparameters are selected using a small
number of independent pilot replications and are then held fixed in the main experiment. Full implementation details and additional experiments are provided in Appendix~\ref{app:experiments}.

\paragraph{Environments.}
\textbf{Env I} is a 5-dimensional simulated continuous-state MDP with binary actions. The
initial state is Gaussian, the transition is linear with Gaussian noise, and
the reward is the negative squared distance from a fixed target point. Both
the behavior and target policies are logistic, with the behavior policy
estimated using only the first three state coordinates to introduce mild
misspecification. \textbf{Env II} is a partially observed environment in which the
true system evolves in an 8-dimensional hidden state but the agent observes
only a noisy 5-dimensional projection, so a single observation does not fully
determine the hidden state. We use this setting only as an empirical robustness
stress test under violation of the observed-state Markov assumption; it is not
covered by the MDP identification theory. \textbf{Env III--VI} are four robotic manipulation tasks from
MetaWorld~\citep{mclean2025metaworld}: \texttt{door-open-v3},
\texttt{drawer-open-v3}, \texttt{button-press-v3}, and
\texttt{faucet-open-v3}. Each task has a 39-dimensional state and a
4-dimensional continuous action. The behavior policy is a mixture of the
target policy and a broader exploration policy, creating mismatch between the
behavior and target distributions. Reference values are approximated by
target-policy Monte Carlo rollout; for Env II, each evaluation observation is
paired with a fixed canonical latent completion as detailed in
Appendix~\ref{app:experiments}.

\begin{figure}[H]
\centering
\includegraphics[width=\linewidth]{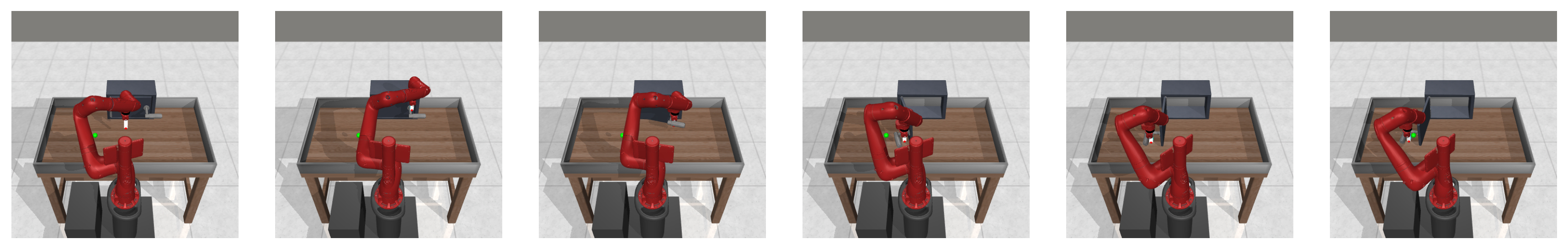}
\caption{Frames from the MetaWorld \texttt{door-open-v3} task. The robot arm
approaches the handle, makes contact, and pulls the door open.}
\label{fig:metaworld_frames}
\end{figure}

\paragraph{Implementation.}
We use the value class $V_\theta(s)=\theta^\top\phi(s)$, where $\phi$
combines random Fourier features and fixed MLP features. The conditional
means $\widehat m_1(S)$ and $\widehat m_2(S)$ are estimated using neural
networks on a separate set of trajectories and then kept fixed during
value-function fitting. In MetaWorld, the target-policy density is known
exactly, while the behavior density is estimated from the logged data. We
use self-normalized importance ratios because multi-step weights can be
highly variable. The corresponding sensitivity analysis is reported in
Appendix~\ref{app:ratio_sensitivity}. We evaluate
$\alpha\in\{0,0.25,0.5,0.75,1\}$.

\paragraph{Bias--Variance Behavior.}
We use Env III to illustrate one setting in which residual mixing produces a bias--variance tradeoff. As shown in
Figure~\ref{fig:door_bias_variance}, increasing $\alpha$ reduces squared
bias while increasing variance. The resulting mean MSE has a shallow interior
minimum at $\alpha=0.5$. In particular, squared bias decreases from
$0.04601$ to $0.03824$, while variance increases from $0.03798$ to
$0.04578$. The corresponding MSE decreases from $0.08361$ at $\alpha=0$
to $0.08180$ at $\alpha=0.5$, before increasing to $0.08356$ at
$\alpha=1$. This pattern need not hold equally strongly in every setting,
since the effect of $\alpha$ also depends on the value class, transition
dynamics, and finite-sample estimation.

\begin{figure}[t]
\centering
\includegraphics[width=0.75\linewidth]{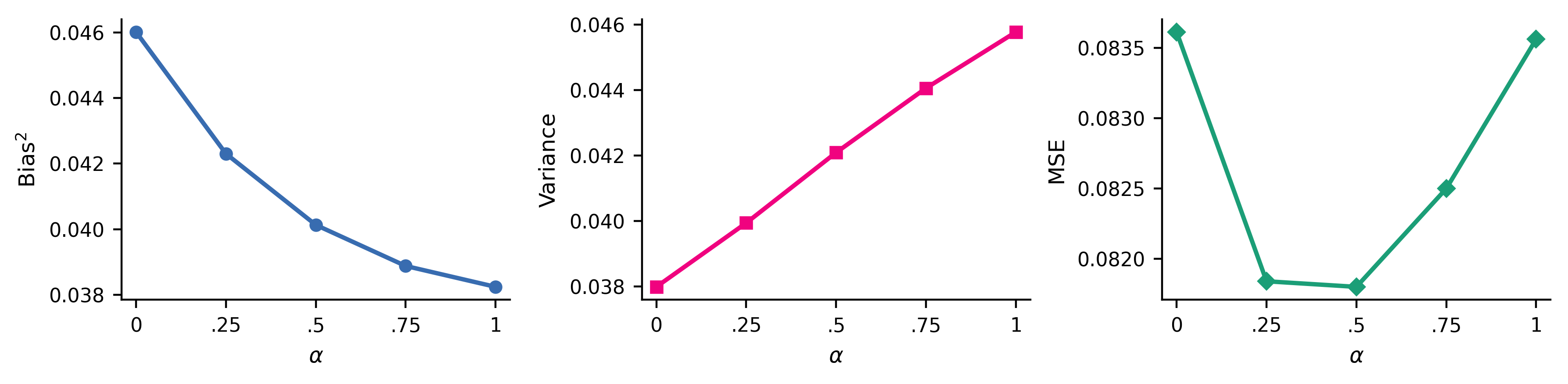}
\caption{Pointwise squared bias, variance, and MSE in the MetaWorld
\texttt{door-open-v3} task as functions of $\alpha$. The two-step
contribution reduces squared bias but increases variance, with a shallow
MSE minimum at $\alpha=0.5$.}
\label{fig:door_bias_variance}
\end{figure}

\paragraph{Comparison.}
Table~\ref{tab:comparison} compares pointwise value estimation with Fitted Q
Evaluation (FQE)~\citep{le2019batch} and Kernel BRM/MQL
~\citep{kernelloss,uehara2020minimax}. We report Mixed-AC at $\alpha=0$
and the best fixed $\alpha$, where the latter is chosen post hoc using true
MSE and is shown only as a diagnostic. Selecting $\alpha$ using only offline
data is a separate model-selection problem; the observable validation
criteria examined in Appendix~\ref{app:alpha_selection} do not uniformly
recover the MSE-minimizing candidate. We also report scalar policy-value error for
$J^\pi=\mathbb{E}_{S\sim\nu}[V^\pi(S)]$ against
DualDICE~\citep{nachum2019dualdice}, GenDICE~\citep{zhang2020gendice},
and BestDICE~\citep{yang2020offpolicy}. These distribution-correction
methods are evaluated only for the scalar policy value. For this scalar
comparison, Mixed-AC uses the critic-scaled validation rule described in
Appendix~\ref{app:alpha_selection}.

\begin{table}[t]
\centering
\caption{Pointwise MSE and scalar policy-value squared error, reported as
mean $\pm$ SE over $100$ repetitions. The best fixed $\alpha$ is chosen
post hoc using true MSE and is included only as a diagnostic.}
\label{tab:comparison}
\small
\setlength{\tabcolsep}{2.3pt}
\resizebox{\linewidth}{!}{%
\begin{tabular}{lcccccc}
\toprule
\textbf{Method}
& \textbf{Env I} & \textbf{Env II}
& \textbf{Env III} & \textbf{Env IV}
& \textbf{Env V} & \textbf{Env VI} \\
\midrule
\multicolumn{7}{l}{\textbf{Pointwise Value MSE}}\\
FQE
& $\mathbf{.01777{\scriptstyle \pm .00089}}$
& $.02217{\scriptstyle \pm .00033}$
& $.69464{\scriptstyle \pm .01138}$
& $.08911{\scriptstyle \pm .00460}$
& $.01105{\scriptstyle \pm .00029}$
& $1.95669{\scriptstyle \pm .02731}$ \\

Kernel BRM/MQL
& $.11884{\scriptstyle \pm .00225}$
& $.03378{\scriptstyle \pm .00023}$
& $1.22100{\scriptstyle \pm .10387}$
& $2.62950{\scriptstyle \pm .26174}$
& $.13920{\scriptstyle \pm .01169}$
& $8.17740{\scriptstyle \pm .63216}$ \\

Mixed-AC ($\alpha=0$)
& $.14404{\scriptstyle \pm .00612}$
& $.01850{\scriptstyle \pm .00039}$
& $.08361{\scriptstyle \pm .00819}$
& $\mathbf{.05861{\scriptstyle \pm .00814}}$
& $.003393{\scriptstyle \pm .000634}$
& $.14376{\scriptstyle \pm .01713}$ \\

Mixed-AC (best fixed $\alpha$)
& $.14404{\scriptstyle \pm .00612}$
& $\mathbf{.01828{\scriptstyle \pm .00035}}$
& $\mathbf{.08180{\scriptstyle \pm .00808}}$
& $\mathbf{.05861{\scriptstyle \pm .00814}}$
& $\mathbf{.003186{\scriptstyle \pm .000397}}$
& $\mathbf{.11888{\scriptstyle \pm .01035}}$ \\
\midrule
\multicolumn{7}{l}{\textbf{$J^\pi$ Squared Error}}\\
Mixed-AC
& $.04725{\scriptstyle \pm .00775}$
& $\mathbf{.00095{\scriptstyle \pm .00013}}$
& $\mathbf{.11495{\scriptstyle \pm .01546}}$
& $\mathbf{.07719{\scriptstyle \pm .00867}}$
& $\mathbf{.00352{\scriptstyle \pm .000711}}$
& $\mathbf{.15939{\scriptstyle \pm .02217}}$ \\

DualDICE
& $.11974{\scriptstyle \pm .02682}$
& $.12567{\scriptstyle \pm .00758}$
& $.64101{\scriptstyle \pm .00824}$
& $.16485{\scriptstyle \pm .00402}$
& $.01224{\scriptstyle \pm .00020}$
& $.82602{\scriptstyle \pm .01283}$ \\

GenDICE
& $1.78283{\scriptstyle \pm .16016}$
& $.00252{\scriptstyle \pm .00002}$
& $.12537{\scriptstyle \pm .01305}$
& $.08068{\scriptstyle \pm .00941}$
& $.00864{\scriptstyle \pm .00074}$
& $.32441{\scriptstyle \pm .03335}$ \\

BestDICE
& $\mathbf{.00394{\scriptstyle \pm .00047}}$
& $.007350{\scriptstyle \pm .000410}$
& $.43554{\scriptstyle \pm .01625}$
& $.10980{\scriptstyle \pm .00479}$
& $.01527{\scriptstyle \pm .00045}$
& $.69331{\scriptstyle \pm .01481}$ \\
\bottomrule
\end{tabular}%
}
\end{table}

For pointwise estimation, FQE and Kernel BRM/MQL perform better in Env I,
a simple fully observed Markov setting, whereas Mixed-AC has lower MSE than
both baselines in Env II and all four MetaWorld tasks. For scalar evaluation, Mixed-AC has the lowest mean squared error in five of
six environments, while BestDICE is lowest in Env I. Paired intervals favor
Mixed-AC over BestDICE in Env II through Env VI. Mixed-AC also has lower mean error than GenDICE in all six environments,
although the paired intervals include zero in Env III and Env IV, and has
lower error than DualDICE in all six environments.
Additional fixed-$\alpha$ results, practical offline selection procedures,
critic ablations, importance-ratio sensitivity experiments, paired
comparisons, and implementation details are provided in
Appendix~\ref{app:experiments}.

\section{Conclusion}

We proposed Mixed-AC, which combines a mixed Bellman residual with an
adaptive critic based on conditional predictions of future features. Across the experiments, the choice of $\alpha$ changes the effect of
one-step and two-step information, while the critic adapts its test functions
to the observed transition structure. In some settings, this mixing also
produces a useful bias--variance tradeoff. The experiments suggest that the method is particularly useful in more complex settings, such as partially observed, higher-order, or nonlinear environments, where using a data-dependent kernel can improve value estimation. 

However, performance depends on tuning choices, including the value and
critic function classes and the choice of $\alpha$. Our validation
experiments also show that selecting $\alpha$ using only offline data can be
difficult, making more reliable offline tuning an important direction for
future work.

\bibliographystyle{asa}
\bibliography{causal.bib}

\appendix

\section{Proof of Proposition \ref{contraction}}\label{contractionproof}

Let $V_1,V_2:\mathcal{S}\to\mathbb{R}$ be any two bounded functions. By definition,
\[
\T_\pi^{\alpha}V_1 - \T_\pi^{\alpha}V_2
=
(1-\alpha)\big(\T_\pi V_1 - \T_\pi V_2\big)
+
\alpha\big(\T_\pi^{(2)}V_1 - \T_\pi^{(2)}V_2\big).
\]
Taking the supremum norm and applying the triangle inequality,
\begin{align*}
\|\T_\pi^{\alpha}V_1 - \T_\pi^{\alpha}V_2\|_\infty
&\le
(1-\alpha)\|\T_\pi V_1 - \T_\pi V_2\|_\infty
+
\alpha\|\T_\pi^{(2)}V_1 - \T_\pi^{(2)}V_2\|_\infty.
\end{align*}
Since $\T_\pi$ is a $\gamma$-contraction under the supremum norm, we know $
\|\T_\pi V_1 - \T_\pi V_2\|_\infty
\le
\gamma\|V_1 - V_2\|_\infty
$. Moreover, $\T_\pi^{(2)}=\T_\pi\circ\T_\pi$ is a $\gamma^2$-contraction, i.e., $
\|\T_\pi^{(2)}V_1 - \T_\pi^{(2)}V_2\|_\infty
\le
\gamma^2\|V_1 - V_2\|_\infty
$. Substituting these bounds gives
\begin{align*}
\|\T_\pi^{\alpha}V_1 - \T_\pi^{\alpha}V_2\|_\infty
&\le
(1-\alpha)\gamma\|V_1 - V_2\|_\infty
+
\alpha\gamma^2\|V_1 - V_2\|_\infty \\
&=
\big((1-\alpha)\gamma + \alpha\gamma^2\big)
\|V_1 - V_2\|_\infty.
\end{align*}
This proves the contraction inequality. Finally, if $\alpha>0$ and $0<\gamma<1$, then
$$
(1-\alpha)\gamma + \alpha\gamma^2
=
\gamma\big(1-\alpha + \alpha\gamma\big)
<
\gamma
$$
since $\gamma<1$ implies $\alpha\gamma<\alpha$. Therefore, $\T_\pi^{\alpha}$ is a contraction with a strictly smaller Lipschitz constant than $\T_\pi$ whenever $\alpha >0$. 

\section{Proof of Theorem \ref{thm:identification}}
\label{ident_proof}

\begin{proof}
Let \(g(s)=\delta_V^\alpha(s)\). Since
\(\delta_V^\alpha\in L^1(\mu_b)\),
\(\mathcal{L}(V,h)\) is well-defined for every bounded measurable critic
function \(h\).

First suppose that \(g(s)=0\) \(\mu_b\)-almost everywhere. Then, for
any bounded measurable \(h\),
\[
\mathcal{L}(V,h)
=
\int h(s)g(s)\,\mu_b(ds)
=
0.
\]
This proves \((ii)\Rightarrow(i)\).

Conversely, suppose that \(\mathcal{L}(V,h)=0\) for every bounded
measurable critic function \(h\). Then
\[
\int h(s)g(s)\,\mu_b(ds)=0
\]
for every bounded measurable \(h\). Taking
\(h(s)=\mathbf{1}\{g(s)>0\}\) gives
\[
0
=
\int_{\{g>0\}} g(s)\,\mu_b(ds).
\]
Since \(g(s)>0\) on \(\{g>0\}\), it follows that
\(\mu_b(\{g>0\})=0\). Similarly, taking
\(h(s)=\mathbf{1}\{g(s)<0\}\) gives
\(\mu_b(\{g<0\})=0\). Hence
\[
\mu_b(\{g\neq 0\})=0,
\]
which means that
\[
\delta_V^\alpha(s)=0
\quad
\mu_b\text{-almost everywhere}.
\]
This proves \((i)\Rightarrow(ii)\).

Now suppose \(V=V^\pi\). Since \(V^\pi\) is the fixed point of the
one-step Bellman operator,
\[
\mathcal{T}_\pi V^\pi=V^\pi.
\]
Therefore
\[
\mathcal{T}_\pi^{(2)}V^\pi
=
\mathcal{T}_\pi(\mathcal{T}_\pi V^\pi)
=
V^\pi.
\]
Thus
\[
\delta_{V^\pi}^{(1)}(s)=0,
\qquad
\delta_{V^\pi}^{(2)}(s)=0
\]
for all \(s\in\mathcal{S}\), and hence
\[
\delta_{V^\pi}^{\alpha}(s)=0
\]
for all \(s\in\mathcal{S}\). By the equivalence above,
\[
\mathcal{L}(V^\pi,h)=0
\]
for every bounded measurable critic function \(h\).

Finally, suppose that $\mathcal{L}(V,h)=0$ for every bounded measurable
critic function $h$. From the equivalence above,
\[
\delta_V^\alpha(s)=0
\qquad
\mu_b\text{-almost everywhere}.
\]
If $\delta_V^\alpha$ is continuous and $\mu_b$ has full support on
$\mathcal S$, this implies
\[
\delta_V^\alpha(s)=0
\qquad
\text{for all }s\in\mathcal S.
\]
Therefore $\mathcal{T}_\pi^\alpha V=V$. By
Proposition~\ref{contraction}, $\mathcal{T}_\pi^\alpha$ has a unique bounded
fixed point. Since $V^\pi$ is also a fixed point, it follows that
$V=V^\pi$.

This completes the proof.
\end{proof}

\section{Derivation of the Closed-Form Inner Optimization}
\label{app:closed_form_inner}

In this section, we derive the closed-form solution of the inner maximization problem. Recall that the empirical minimax objective is
\begin{equation}
\min_{V\in\mathcal{V}}
\left[
\max_{\beta\in\mathcal{H}^3}
\left\{
\frac{1}{n} \langle \beta, \widehat Z^* r_{\alpha}(V) \rangle_{\mathcal{H}^3}
-
\lambda_h \|\beta\|_{\mathcal{H}^3}^2
\right\}
+
\lambda_V\Omega(V)
\right].
\end{equation}
where $r_{\alpha}(V)\in\mathbb{R}^n$ is the empirical mixed residual vector and $\widehat Z^*: \mathbb{R}^n \to \mathcal{H}^3$ is the adjoint operator defined by $\widehat Z^* y = \sum_{i=1}^n y_i \widehat Z_i$.

For fixed $V$, define 
\begin{equation}
q(V) = \frac{1}{n} \widehat Z^* r_{\alpha}(V).
\end{equation}
Then the inner problem can be written as
\begin{equation}
\max_{\beta\in\mathcal{H}^3}
\left\{
\langle \beta, q(V) \rangle_{\mathcal{H}^3} - \lambda_h \langle \beta, \beta \rangle_{\mathcal{H}^3}
\right\}.
\end{equation}
This is a concave quadratic functional of $\beta$. Taking the Fréchet derivative with respect to $\beta$ and setting it to zero gives
\begin{equation}
q(V) - 2\lambda_h \beta = 0.
\end{equation}
Solving for $\beta$, we obtain
\begin{equation}
\beta^\star_{\alpha}(V) = \frac{1}{2\lambda_h} q(V) = \frac{1}{2\lambda_h n} \widehat Z^* r_{\alpha}(V).
\end{equation}

Substituting this back into the objective gives
\begin{align}
\langle \beta^\star_{\alpha}(V), q(V) \rangle_{\mathcal{H}^3} - \lambda_h \|\beta^\star_{\alpha}(V)\|_{\mathcal{H}^3}^2
&= \frac{1}{2\lambda_h} \langle q(V), q(V) \rangle_{\mathcal{H}^3} - \lambda_h \frac{1}{4\lambda_h^2} \langle q(V), q(V) \rangle_{\mathcal{H}^3} \\
&= \frac{1}{4\lambda_h} \langle q(V), q(V) \rangle_{\mathcal{H}^3}.
\end{align}
Using the definition of $q(V)$ and the property of the adjoint $\langle \widehat Z^* y, \widehat Z^* y \rangle_{\mathcal{H}^3} = y^\top \widehat Z\widehat Z^* y$, we have
\begin{equation}
\langle q(V), q(V) \rangle_{\mathcal{H}^3} = \frac{1}{n^2} r_{\alpha}(V)^\top \widehat Z\widehat Z^* r_{\alpha}(V) = \frac{1}{n^2} r_{\alpha}(V)^\top \widehat{\mathbf K}_n r_{\alpha}(V),
\end{equation}
where $\widehat{\mathbf K}_n$ is the $n \times n$ kernel matrix with $(\widehat{\mathbf K}_n)_{ij} = \langle \widehat Z_i, \widehat Z_j \rangle_{\mathcal{H}^3}$. Therefore the outer objective becomes
\begin{equation}
\widehat{J}(V) = \frac{1}{4\lambda_h n^2} r_{\alpha}(V)^\top \widehat{\mathbf K}_n r_{\alpha}(V) + \lambda_V\,\Omega(V).
\end{equation}

\section{Proof of Proposition~\ref{prop:finite_sample_stability}}
\label{app:stability}

We prove the finite-sample result conditional on the
critic-construction sample $\mathcal{D}_n^{\mathrm{crit}}$.
For notational simplicity, write $n$ for the fitting sample
size $n_t$ and set $\ell_{d,\delta}=\log(8d/\delta)$. Conditional on
$\mathcal{D}_n^{\mathrm{crit}}$,
the fitted predictors $\widehat{m}_1$ and $\widehat{m}_2$
are fixed functions.

Let $\phi:\mathcal{S}\to\mathbb{R}^d$ be a fixed feature map
and consider the linear value class
$V_\theta(s)=\theta^\top\phi(s)$ with
$\Theta=\{\theta\in\mathbb{R}^d:\|\theta\|_2\le R_\theta\}$.
For a fitting observation
$O=(S,A,R,S',A',R',S'')$,
define the critic feature
\[
\widehat{z}(S)
=
\begin{pmatrix}
\phi(S)\\[2pt]
\gamma\,\widehat{m}_1(S)\\[2pt]
\gamma^2\widehat{m}_2(S)
\end{pmatrix}
\in\mathbb{R}^{3d}.
\]
The feature $\widehat{z}(S)$ depends on the fitting observation only through
$S$.

Let $\rho_0=\pi(A|S)/\pi_b(A|S)$,
$\rho_1=\pi(A'|S')/\pi_b(A'|S')$, and $\rho_{01}=\rho_0\rho_1$.
For $V_\theta$, define
\begin{align*}
\widehat\delta^{(1)}_\theta(O)
&=
\rho_0\{R+\gamma V_\theta(S')\}-V_\theta(S),\\[4pt]
\widehat\delta^{(2)}_\theta(O)
&=
\rho_0 R
+\gamma\rho_{01}R'
+\gamma^2\rho_{01}V_\theta(S'')-V_\theta(S),
\end{align*}
and the mixed residual
$r_\alpha(\theta;O)
=(1-\alpha)\widehat\delta^{(1)}_\theta(O)
+\alpha\widehat\delta^{(2)}_\theta(O)$.

For independent fitting observations $O_1,\ldots,O_n$, define
\[
\widehat{m}(\theta)
=\frac{1}{n}\sum_{i=1}^n\widehat{z}(S_i)r_\alpha(\theta;O_i),
\qquad
m_{\mathrm{crit}}(\theta)
=\mathbb{E}\!\left[
\widehat{z}(S)r_\alpha(\theta;O)
\mid\mathcal{D}_n^{\mathrm{crit}}
\right],
\]
and
$\widehat{J}(\theta)=\|\widehat{m}(\theta)\|_2^2+\lambda\|\theta\|_2^2$,
$J_{\mathrm{crit}}(\theta)
=\|m_{\mathrm{crit}}(\theta)\|_2^2+\lambda\|\theta\|_2^2$.

\begin{proof}
Throughout, all expectations and probabilities are conditional
on $\mathcal{D}_n^{\mathrm{crit}}$, under which
$\widehat{m}_1$, $\widehat{m}_2$, and $\widehat{z}$ are fixed
while the fitting observations remain independent.

Since $V_\theta(s)=\theta^\top\phi(s)$, direct expansion gives
\begin{align*}
\widehat\delta^{(1)}_\theta(O)
&=\rho_0 R+\{\gamma\rho_0\phi(S')-\phi(S)\}^\top\theta,\\[4pt]
\widehat\delta^{(2)}_\theta(O)
&=\rho_0 R+\gamma\rho_{01}R'
+\{\gamma^2\rho_{01}\phi(S'')-\phi(S)\}^\top\theta.
\end{align*}
Therefore the mixed residual is affine in $\theta$,
\[
r_\alpha(\theta;O)=b_\alpha(O)+a_\alpha(O)^\top\theta,
\]
where
$b_\alpha(O)=\rho_0 R+\alpha\gamma\rho_{01}R'$
and
\[
a_\alpha(O)
=(1-\alpha)\{\gamma\rho_0\phi(S')-\phi(S)\}
+\alpha\{\gamma^2\rho_{01}\phi(S'')-\phi(S)\}.
\]

By Assumptions~\ref{ass:bounded_reward}
and~\ref{ass:bounded_feature},
$\|\phi(s)\|_2\le B_\phi$, $|R|\le R_{\max}$, $|R'|\le R_{\max}$.
Combined with $\rho_0\le\bar\rho$ and $\rho_{01}\le\bar\rho_2$
almost surely,
\[
|b_\alpha(O)|
\le\bar\rho R_{\max}+\alpha\gamma\bar\rho_2 R_{\max}
=:B_b,
\]
and
\[
\|a_\alpha(O)\|_2
\le(1-\alpha)(1+\gamma\bar\rho)B_\phi
+\alpha(1+\gamma^2\bar\rho_2)B_\phi
=:B_a.
\]
Hence for every $\theta\in\Theta$,
$|r_\alpha(\theta;O)|\le B_b+B_a R_\theta=:B_r$.

By Assumption~\ref{ass:bounded_estimated_critic},
$\|\widehat{m}_1(s)\|_2\le B_{m,1}$
and $\|\widehat{m}_2(s)\|_2\le B_{m,2}$, so
\[
\|\widehat{z}(S)\|_2^2
\le B_\phi^2+\gamma^2 B_{m,1}^2+\gamma^4 B_{m,2}^2
=:B_Z^2.
\]
It follows that
$\|\widehat{z}(S)r_\alpha(\theta;O)\|_2\le B_Z B_r$
uniformly over $\theta\in\Theta$.

Define
\[
\widehat{q}
=\frac{1}{n}\sum_{i=1}^n\widehat{z}(S_i)b_\alpha(O_i),
\qquad
q_{\mathrm{crit}}
=\mathbb{E}\!\left[
\widehat{z}(S)b_\alpha(O)\mid\mathcal{D}_n^{\mathrm{crit}}
\right],
\]
and
\[
\widehat{M}
=\frac{1}{n}\sum_{i=1}^n\widehat{z}(S_i)a_\alpha(O_i)^\top,
\qquad
M_{\mathrm{crit}}
=\mathbb{E}\!\left[
\widehat{z}(S)a_\alpha(O)^\top\mid\mathcal{D}_n^{\mathrm{crit}}
\right].
\]
Then $\widehat{m}(\theta)=\widehat{q}+\widehat{M}\theta$ and
$m_{\mathrm{crit}}(\theta)=q_{\mathrm{crit}}+M_{\mathrm{crit}}\theta$,
so
\[
\widehat{m}(\theta)-m_{\mathrm{crit}}(\theta)
=(\widehat{q}-q_{\mathrm{crit}})
+(\widehat{M}-M_{\mathrm{crit}})\theta.
\]
Taking norms and using $\|\theta\|_2\le R_\theta$,
\[
\sup_{\theta\in\Theta}
\|\widehat{m}(\theta)-m_{\mathrm{crit}}(\theta)\|_2
\le
\|\widehat{q}-q_{\mathrm{crit}}\|_2
+R_\theta\|\widehat{M}-M_{\mathrm{crit}}\|_{\mathrm{op}}.
\]

For the vector term, define the centered $3d\times1$ matrices
\[
X_i^{(q)}=\widehat z(S_i)b_\alpha(O_i)-
\mathbb E[\widehat z(S)b_\alpha(O)\mid\mathcal D_n^{\mathrm{crit}}].
\]
The uncentered vector has norm at most $B_ZB_b$, so
$\|X_i^{(q)}\|_2\le 2B_ZB_b$ and a crude variance proxy is
$v_q\le 4nB_Z^2B_b^2$. The rectangular matrix Bernstein inequality gives,
with probability at least $1-\delta/2$,
\[
\|\widehat{q}-q_{\mathrm{crit}}\|_2
\le C_1 B_Z B_b
\left\{
\sqrt{\frac{\ell_{d,\delta}}{n}}
+\frac{\ell_{d,\delta}}{n}
\right\}.
\]
For the matrix term, define the centered $3d\times d$ matrices
\[
X_i^{(M)}=\widehat z(S_i)a_\alpha(O_i)^\top-
\mathbb E[\widehat z(S)a_\alpha(O)^\top\mid\mathcal D_n^{\mathrm{crit}}].
\]
Here $\|X_i^{(M)}\|_{\mathrm{op}}\le 2B_ZB_a$ and a crude variance proxy is
$v_M\le 4nB_Z^2B_a^2$. Matrix Bernstein gives, with probability at least
$1-\delta/2$,
\[
\|\widehat{M}-M_{\mathrm{crit}}\|_{\mathrm{op}}
\le C_2 B_Z B_a
\left\{
\sqrt{\frac{\ell_{d,\delta}}{n}}
+\frac{\ell_{d,\delta}}{n}
\right\}.
\]
These are standard bounds for sums of bounded random vectors
and matrices; see, e.g.,
\citet{tropp2015introduction}.
A union bound gives, with probability at least $1-\delta$,
\[
\sup_{\theta\in\Theta}
\|\widehat{m}(\theta)-m_{\mathrm{crit}}(\theta)\|_2
\le C_3 B_Z B_r
\left\{
\sqrt{\frac{\ell_{d,\delta}}{n}}
+\frac{\ell_{d,\delta}}{n}
\right\},
\]
where we used $B_r=B_b+B_a R_\theta$.

Since $\|m_{\mathrm{crit}}(\theta)\|_2\le B_Z B_r$ and
$\|\widehat{m}(\theta)\|_2\le B_Z B_r$, the triangle
inequality gives
\[
|\widehat{J}(\theta)-J_{\mathrm{crit}}(\theta)|
=\bigl|\|\widehat{m}(\theta)\|_2^2
-\|m_{\mathrm{crit}}(\theta)\|_2^2\bigr|
\le 2B_Z B_r
\|\widehat{m}(\theta)-m_{\mathrm{crit}}(\theta)\|_2.
\]
Hence, with probability at least $1-\delta$,
\[
\sup_{\theta\in\Theta}
|\widehat{J}(\theta)-J_{\mathrm{crit}}(\theta)|
\le C_4 B_Z^2 B_r^2
\left\{
\sqrt{\frac{\ell_{d,\delta}}{n}}
+\frac{\ell_{d,\delta}}{n}
\right\}.
\]

Let $\widehat\theta\in\argmin_{\theta\in\Theta}\widehat{J}(\theta)$.
For any $\theta\in\Theta$,
\begin{align*}
J_{\mathrm{crit}}(\widehat\theta)
&\le\widehat{J}(\widehat\theta)
+\sup_{\vartheta\in\Theta}
|\widehat{J}(\vartheta)-J_{\mathrm{crit}}(\vartheta)|\\
&\le\widehat{J}(\theta)
+\sup_{\vartheta\in\Theta}
|\widehat{J}(\vartheta)-J_{\mathrm{crit}}(\vartheta)|\\
&\le J_{\mathrm{crit}}(\theta)
+2\sup_{\vartheta\in\Theta}
|\widehat{J}(\vartheta)-J_{\mathrm{crit}}(\vartheta)|.
\end{align*}
Taking the infimum over $\theta\in\Theta$ and substituting
the uniform bound above, there exists a universal
constant $C>0$ such that
\[
J_{\mathrm{crit}}(\widehat\theta)
\le\inf_{\theta\in\Theta}J_{\mathrm{crit}}(\theta)
+CB_Z^2 B_r^2
\left\{
\sqrt{\frac{\ell_{d,\delta}}{n}}
+\frac{\ell_{d,\delta}}{n}
\right\}.
\]
This completes the proof.
\end{proof}

\section{Experimental Details}
\label{app:experiments}

This appendix gives the full experimental setup and additional results for
Section~\ref{sec:experiments}.

\subsection{Env I Simulated MDP}

The state space is $\mathbb{R}^5$ and the action space is binary,
$\mathcal{A}=\{0,1\}$. The initial state is drawn from
$S_0\sim\mathcal{N}(0,I_5)$.

The transition dynamics are
\[
S_{t+1}=A_{a_t}S_t+\varepsilon_t,
\qquad
\varepsilon_t\sim\mathcal{N}(0,0.05^2I_5),
\]
where
\[
A_0 =
\begin{pmatrix}
0.90 & 0.10 & 0.00 & 0.00 & 0.00 \\
0.00 & 0.82 & 0.10 & 0.00 & 0.00 \\
0.00 & 0.00 & 0.85 & 0.08 & 0.00 \\
0.00 & 0.00 & 0.00 & 0.80 & 0.10 \\
0.00 & 0.00 & 0.00 & 0.00 & 0.78
\end{pmatrix},
\qquad
A_1 =
\begin{pmatrix}
0.72 & 0.20 & 0.08 & 0.00 & 0.00 \\
0.00 & 0.84 & 0.12 & 0.04 & 0.00 \\
0.00 & 0.00 & 0.80 & 0.12 & 0.05 \\
0.00 & 0.00 & 0.00 & 0.82 & 0.12 \\
0.00 & 0.00 & 0.00 & 0.00 & 0.76
\end{pmatrix}.
\]

The reward is
\[
R(s,a)=-\|s-s^\star\|^2,
\qquad
s^\star=(0.1,0.05,0,-0.05,0.08)^\top.
\]
The reward is highest near $s^\star$. Although it does not depend directly
on the action, the action changes the future state and therefore affects
future rewards.

Both the behavior and target policies are logistic functions of the state,
\[
\pi_b(1|s)=\sigma(w_b^\top s),
\qquad
\pi(1|s)=\sigma(w_\pi^\top s),
\]
with
\[
w_b=(-0.3,0.7,-0.5,0.25,-0.15)^\top,
\qquad
w_\pi=(0.45,1.10,-0.10,0.35,0.20)^\top.
\]
The probabilities are clipped to $[0.05,0.95]$.

The behavior policy is treated as unknown and is estimated by logistic
regression using the first three state coordinates. This introduces mild
misspecification relative to the true behavior policy.

Each episode has horizon $60$ and provides $58$ usable two-step tuples. The
experiment targets approximately $2000$ tuples. Since complete
episodes are preserved, this gives $35$ episodes and $2030$ tuples. The
critic-construction sample contains $18$ episodes and $1044$ tuples, while
the value-fitting sample contains $17$ episodes and $986$ tuples.

For evaluation, we use $300$ fixed states drawn from
$\mathcal{N}(0,0.15^2I_5)$. For each state, the true target-policy value is
approximated using $300$ independent Monte Carlo rollouts with horizon $60$
and $\gamma=0.9$. The resulting evaluation states and truth values are kept
fixed across all $100$ repetitions and all methods.

\subsection{Env II Partially Observable MDP}

The system evolves in a latent state $h_t\in\mathbb{R}^8$ according to
\[
h_{t+1}=A^h_{a_t}h_t+\varepsilon_t,
\qquad
\varepsilon_t\sim\mathcal{N}(0,0.05^2I_8).
\]
The matrices $A^h_0$ and $A^h_1$ are fixed stable matrices generated once
at import time from a NumPy generator with seed $999$: a Gaussian matrix is
rescaled to spectral radius $0.87$ for action $0$ and $0.83$ for action $1$.
The observation matrix $C$ is drawn from the same generator and each row is
normalized to unit Euclidean norm. These matrices are stored as deterministic
module constants rather than regenerated for each trajectory. Training trajectories are initialized independently with
$h_0\sim\mathcal{N}(0,I_8)$.

The observed state is
\[
S_t=Ch_t+\eta_t,
\qquad
\eta_t\sim\mathcal{N}(0,0.15^2I_5),
\]
where $C\in\mathbb{R}^{5\times8}$. Thus, the agent observes a noisy
5-dimensional projection of the 8-dimensional latent state.

The reward is
\[
R(S_t,A_t)
=
-0.2\|S_t-s^\star\|^2-0.02A_t,
\qquad
s^\star=(0.1,0.05,0,-0.05,0.08)^\top.
\]

The behavior and target policies use the same logistic weight vectors as
Env I and depend on the observed state. As in Env I, the behavior policy is
estimated using only the first three observed coordinates.

Each episode has horizon $60$. The experiment uses $35$ complete episodes
and $2030$ two-step tuples. The critic-construction sample contains $18$
episodes and $1044$ tuples, while the value-fitting sample contains $17$
episodes and $986$ tuples.

Evaluation uses $300$ fixed observations
$s\sim\mathcal{N}(0,0.15^2I_5)$. For each observation, it deterministically
defines
\[
h^\dagger(s)=\operatorname{clip}(C^\dagger s,-4,4),
\]
where $C^\dagger$ is the Moore--Penrose pseudoinverse. All $300$
target-policy Monte Carlo rollouts for that observation start from
$(S_0,h_0)=(s,h^\dagger(s))$. The simulator-specific reference value is
\[
V_{\mathrm{can}}^\pi(s)
=
\mathbb{E}^{\pi}\!\left[
\sum_{t=0}^{59}\gamma^tR_t
\;\middle|\;
S_0=s,\,
h_0=h^\dagger(s)
\right].
\]
It is approximated with $300$ Monte Carlo rollouts and does not integrate over
a conditional distribution of latent states given $S_0=s$. It should not be
interpreted as an observed-state MDP value. Env II remains only an empirical
partial-observability robustness experiment.

\subsection{MetaWorld environments}
\label{app:metaworld}

We use four tasks from MetaWorld~\citep{mclean2025metaworld}.
Env III is \texttt{door-open-v3}, Env IV is
\texttt{drawer-open-v3}, Env V is \texttt{button-press-v3}, and Env VI is
\texttt{faucet-open-v3}. Each task has a 39-dimensional state and a
4-dimensional continuous action.

The target policy is a squashed Gaussian policy with a fixed state-dependent
pre-squash mean and standard deviation $\sigma_\pi=0.30$. The behavior
policy is a mixture of the target policy and a broader exploration policy,
\[
b(a|s)
=
(1-\varepsilon)\pi(a|s)
+
\varepsilon u(a|s),
\]
where $u$ has noise level $\sigma_u=1.0$. We use $\varepsilon=0.8$ for
Door and $\varepsilon=0.6$ for Drawer, Button, and Faucet.

The primary MetaWorld experiments use $85$ complete episodes and $5015$
tuples. The critic-construction set contains $43$ episodes and $2537$
tuples, while the value-fitting set contains $42$ episodes and $2478$
tuples.

The target-policy density is known analytically, while the behavior action
density is estimated from the logged behavior data using a neural network.
The behavior model is a two-component mixture of squashed diagonal Gaussians.
Its state-dependent means, log standard deviations, and mixture probability
are produced by a two-hidden-layer ReLU network of width $128$; the log
standard deviations are clipped to $[-4,2]$. It is trained by minimizing the
negative squashed-Gaussian mixture log likelihood with Adam (learning rate
$10^{-3}$), for $30$ epochs and minibatches of size $256$, using the logged
state-action pairs. The target density uses the known analytic squashed
Gaussian. Both density calculations use the inverse-tanh boundary clamp
$\varepsilon=10^{-6}$ and the corresponding Jacobian correction. No target
density network is used for the ratios.
The target policy density is evaluated analytically at the logged actions; no
auxiliary target-policy rollout is used for ratio estimation.
Writing $\mu_\pi(s)$ for the known pre-squash mean, the target density is
\[
\pi(a|s)
=
\mathcal N\!\left(
\operatorname{arctanh}(a);
\mu_\pi(s),
\sigma_\pi^2 I_4
\right)
\prod_{j=1}^{4}(1-a_j^2)^{-1}.
\]
The inverse transformation is evaluated with a small numerical stabilization
near the action boundaries. We use
\[
\widehat\rho_0
=
\frac{\pi(A_t|S_t)}
{\widehat b(A_t|S_t)},
\qquad
\widehat\rho_1
=
\frac{\pi(A_{t+1}|S_{t+1})}
{\widehat b(A_{t+1}|S_{t+1})}.
\]

Each MetaWorld task uses $400$ fixed evaluation states obtained from
environment resets. The true value at each state is approximated using $50$
target-policy Monte Carlo rollouts with horizon $60$ and $\gamma=0.9$.

\subsection{Pointwise evaluation}
\label{app:metrics}

For Env II, $V^\pi(s_j)$ in the formulas below denotes the
simulator-specific reference value $V_{\mathrm{can}}^\pi(s_j)$ defined above.

Let $\{s_j\}_{j=1}^M$ denote the fixed evaluation states and let
$\widehat V_r^\pi(s_j)$ be the value estimate in repetition $r$. We use
$R=100$ independent repetitions and define
\[
\overline V^\pi(s_j)
=
\frac{1}{R}\sum_{r=1}^R\widehat V_r^\pi(s_j).
\]

The empirical squared bias is
\[
\mathrm{Bias}^2
=
\frac{1}{M}\sum_{j=1}^M
\left\{
\overline V^\pi(s_j)-V^\pi(s_j)
\right\}^2.
\]

The empirical variance is
\[
\mathrm{Var}
=
\frac{1}{M}\sum_{j=1}^M
\frac{1}{R-1}\sum_{r=1}^R
\left\{
\widehat V_r^\pi(s_j)-\overline V^\pi(s_j)
\right\}^2.
\]

The empirical MSE is
\[
\mathrm{MSE}
=
\frac{1}{RM}\sum_{r=1}^R\sum_{j=1}^M
\left\{
\widehat V_r^\pi(s_j)-V^\pi(s_j)
\right\}^2.
\]
With this variance convention,
\[
\mathrm{MSE}
=
\mathrm{Bias}^2+\frac{R-1}{R}\mathrm{Var}.
\]

For the main method comparisons, we report the mean repetition-level MSE and
its standard error. Paired comparisons use the same generated dataset for
the two methods and report Student-$t$ 95\% confidence intervals.

\subsection{Feature representation}

We use the value class $V_\theta(s)=\theta^\top\phi(s)$, where
\[
\phi(s)
=
\left[
\phi_{\mathrm{RFF}}(s),
\eta\phi_{\mathrm{MLP}}(s)
\right].
\]

The first block consists of random Fourier features for an RBF kernel. Its
coordinates are
\[
[\phi_{\mathrm{RFF}}(s)]_j
=
\sqrt{\frac{2}{D}}\cos(\omega_j^\top s+b_j).
\]
The frequencies are sampled from a centered Gaussian distribution and the
phases are sampled uniformly on $[0,2\pi]$. The RBF bandwidth is chosen by
the median heuristic on the training states.

The second block is obtained from a fixed neural network with $\tanh$
activation. Its weights are generated once and then kept fixed. The scalar
$\eta$ controls the relative size of this block.

Only the coefficient vector $\theta$ is optimized when fitting the value
function.

\subsection{Adaptive critic}

The critic uses conditional means of future features. On
$\mathcal{D}_{\mathrm{crit}}$, we estimate
\[
m_1(s)=\mathbb{E}_b\{\phi(S_{t+1})|S_t=s\},
\qquad
m_2(s)=\mathbb{E}_b\{\phi(S_{t+2})|S_t=s\}
\]
using a neural network.

The network takes $\phi(S_t)$ as input and predicts both
$\phi(S_{t+1})$ and $\phi(S_{t+2})$. After training, the estimated functions
$\widehat m_1$ and $\widehat m_2$ are kept fixed.

The critic used for value fitting is
\[
Z(S)
=
\left[
\phi(S),
\gamma\widehat m_1(S),
\gamma^2\widehat m_2(S)
\right].
\]

The conditional-mean network is trained for $100$ epochs with learning rate
$0.01$. Its hidden width and weight decay are selected separately for each
environment during pilot tuning.

\subsection{Residual construction and value fitting}

The experiments use self-normalized importance ratios
\[
\widetilde\rho_{0,i}
=
\frac{\rho_{0,i}}
{\frac{1}{n}\sum_{j=1}^n\rho_{0,j}},
\qquad
\widetilde\rho_{01,i}
=
\frac{\rho_{0,i}\rho_{1,i}}
{\frac{1}{n}\sum_{j=1}^n\rho_{0,j}\rho_{1,j}}.
\]

The one-step residual is
\[
r_{1,i}(\theta)
=
\widetilde\rho_{0,i}
\left\{
R_i+\gamma V_\theta(S_i')
\right\}
-
V_\theta(S_i),
\]
and the two-step residual is
\[
r_{2,i}(\theta)
=
\widetilde\rho_{0,i}R_i
+
\gamma\widetilde\rho_{01,i}R_i'
+
\gamma^2\widetilde\rho_{01,i}V_\theta(S_i'')
-
V_\theta(S_i).
\]

We combine them as
\[
r_{\alpha,i}(\theta)
=
(1-\alpha)r_{1,i}(\theta)
+
\alpha r_{2,i}(\theta),
\qquad
\alpha\in\{0,0.25,0.5,0.75,1\}.
\]

For a fixed $\alpha$, we estimate $\theta$ by minimizing
\[
\left\|
\frac{1}{n}\sum_{i=1}^n
Z(S_i)r_{\alpha,i}(\theta)
\right\|_2^2
+
\lambda_\theta\|\theta\|_2^2.
\]

We use AdamW with learning rate $0.005$ for $1000$ iterations. The explicit
ridge term is the only regularization applied to $\theta$.

\subsection{Sample splitting}
\label{app:sample_splitting}

All splits are made using complete trajectories.

For the standard experiment, the data are divided into
$\mathcal{D}_{\mathrm{crit}}$ and $\mathcal{D}_{\mathrm{fit}}$. The first
sample is used to estimate the conditional means and construct the critic.
The second sample is used to estimate the value function.

For Env I and Env II,
$\mathcal{D}_{\mathrm{crit}}$ contains $18$ episodes and $1044$ tuples, while
$\mathcal{D}_{\mathrm{fit}}$ contains $17$ episodes and $986$ tuples.

For each MetaWorld environment,
$\mathcal{D}_{\mathrm{crit}}$ contains $43$ episodes and $2537$ tuples, while
$\mathcal{D}_{\mathrm{fit}}$ contains $42$ episodes and $2478$ tuples.

For practical selection of $\alpha$, a third set
$\mathcal{D}_{\mathrm{val}}$ is used. In Env I and Env II, the $35$ episodes
are divided into folds containing $12$, $12$, and $11$ episodes, corresponding
to $696$, $696$, and $638$ tuples. In each MetaWorld environment, the $85$
episodes are divided into folds containing $29$, $28$, and $28$ episodes,
corresponding to $1711$, $1652$, and $1652$ tuples.

The critic-scaled experiment uses separate critic-construction,
value-fitting, and validation subsets. For the common-studentized experiment,
we additionally average the validation score over all six assignments of the
three folds to these roles. All splits are trajectory-disjoint.

\subsection{Pilot hyperparameter selection}
\label{app:pilot}

Hyperparameters are selected using five independent pilot replications before
the main experiment. For each candidate configuration, we compute its average
pointwise MSE on the fixed evaluation states. The configuration with the
smallest pilot MSE is then held fixed for all $100$ repetitions.

For Mixed-AC, pilot tuning is performed at $\alpha=0$. The pilot grid uses
$d_{\mathrm{RFF}}\in\{64,128\}$, $d_{\mathrm{MLP}}\in\{8,16,32\}$,
$\eta\in\{0,0.1,0.25,0.5,1,2,5,10,50,100\}$,
$\lambda_\theta\in\{10^{-5},10^{-4},10^{-3},10^{-2},10^{-1},1,10\}$,
conditional-network hidden width in $\{32,64\}$, and conditional-network
weight decay in $\{10^{-5},10^{-4},10^{-3}\}$. The selected configuration
is then used for every value of $\alpha$.
Keeping these hyperparameters fixed isolates the effect of changing
$\alpha$ within a common estimator configuration; the fixed-$\alpha$ curves
are not intended to represent separately retuned optima at each value of
$\alpha$.

The tuning grids are method-specific because the estimators have different
parameterizations. The comparisons therefore reflect the stated
implementations rather than an equal-count hyperparameter search budget.

The selected configurations are shown in
Table~\ref{tab:mixedac_configs}.

\begin{table}[h]
\centering
\caption{Selected Mixed-AC configurations.}
\label{tab:mixedac_configs}
\small
\setlength{\tabcolsep}{4pt}
\begin{tabular}{lccccc}
\toprule
\textbf{Env.}
& $d_{\mathrm{RFF}}$
& $d_{\mathrm{MLP}}$
& $\eta$
& $\lambda_\theta$
& $(h_{\mathrm{cond}},\lambda_{\mathrm{cond}})$\\
\midrule
I   & 64  & 32 & 5   & 10        & $(32,10^{-4})$\\
II  & 128 & 16 & 1   & $10^{-5}$ & $(32,10^{-5})$\\
III & 64  & 16 & 100 & 10        & $(64,10^{-5})$\\
IV  & 64  & 16 & 10  & $10^{-1}$ & $(32,10^{-5})$\\
V   & 64  & 16 & 10  & 1         & $(32,10^{-3})$\\
VI  & 128 & 32 & 10  & 1         & $(32,10^{-4})$\\
\bottomrule
\end{tabular}
\end{table}

\subsection{Fitted Q Evaluation}

We compare with Fitted Q Evaluation
(FQE)~\citep{le2019batch}. Its hyperparameters are selected independently
using the same five-pilot procedure.

FQE is evaluated in its standard one-step form; the effect of residual
horizon within Mixed-AC is examined separately through the fixed-$\alpha$
and critic-interaction analyses. FQE is fit on the value-fitting split. For
discrete actions its target-policy expectation is summed exactly; for the
continuous MetaWorld actions it is approximated by Monte Carlo with $64$
target-policy action samples per next state in the main FQE configuration.

The FQE grid contains hidden width in $\{64,128\}$, learning rate
$10^{-3}$, weight decay in $\{10^{-5},10^{-4},10^{-3}\}$, the number of
FQE iterations in $\{50,100\}$, and regression steps per iteration in
$\{10,25\}$. This gives $24$ candidate configurations.

The selected configurations are given in
Table~\ref{tab:fqe_configs}.

\begin{table}[h]
\centering
\caption{Selected FQE configurations.}
\label{tab:fqe_configs}
\small
\setlength{\tabcolsep}{4pt}
\begin{tabular}{lccccc}
\toprule
\textbf{Env.} & \textbf{Hidden}
& \textbf{LR} & \textbf{WD}
& \textbf{Iterations} & \textbf{Steps}\\
\midrule
I   & 128 & .001 & .001       & 100 & 25\\
II  & 128 & .001 & $10^{-5}$  & 50  & 25\\
III & 64  & .001 & .001       & 100 & 10\\
IV  & 128 & .001 & .001       & 100 & 25\\
V   & 128 & .001 & $10^{-4}$  & 50  & 25\\
VI  & 128 & .001 & $10^{-4}$  & 50  & 25\\
\bottomrule
\end{tabular}
\end{table}

\subsection{Kernel BRM and MQL}

The kernel baseline follows~\citet{kernelloss} and
\citet{uehara2020minimax}. The value function is
$V_\beta(s)=\sum_{i=1}^n\beta_i k(s_i,s)$, where $k$ is an RBF kernel whose
bandwidth is selected using the median heuristic.

Let $K_n$ be the corresponding Gram matrix. The coefficient vector minimizes
\[
\frac{1}{n^2}r(\beta)^\top K_n r(\beta)
+
\lambda_{\mathrm{RKHS}}\beta^\top K_n\beta.
\]

Using the same five-pilot procedure described above, the regularization
parameter is selected from
$\{10^{-8},10^{-7},10^{-6},10^{-5},10^{-4},10^{-3},
10^{-2},10^{-1},1\}$.

The selected values are
$
\begin{array}{lc}
\toprule
\textbf{Environment} & \lambda_{\mathrm{RKHS}}\\
\midrule
\text{Env I}   & 10^{-5}\\
\text{Env II}  & 10^{-8}\\
\text{Env III} & 10^{-4}\\
\text{Env IV}  & 10^{-4}\\
\text{Env V}   & 10^{-4}\\
\text{Env VI}  & 10^{-4}\\
\bottomrule
\end{array}
$

The regularized quadratic problem is solved using normal equations.
The kernel baseline uses the stated fixed RBF kernel on the original state
coordinates, whereas the hybrid feature representation is part of the
Mixed-AC implementation.

\subsection{Paired pointwise comparisons}

Since the competing estimators are evaluated on the same generated datasets,
we also report paired MSE differences. Negative values favor Mixed-AC.

\begin{table}[h]
\centering
\caption{Paired pointwise MSE differences between Mixed-AC at $\alpha=0$
and FQE. Brackets give 95\% confidence intervals. Negative values favor
Mixed-AC.}
\label{tab:paired_main}
\small
\setlength{\tabcolsep}{7pt}
\begin{tabular}{lc}
\toprule
\textbf{Env.} & \textbf{Mixed-AC $-$ FQE}\\
\midrule
I   & $.12627$ $[.11442,.13812]$\\
II  & $-.00367$ $[-.00454,-.00281]$\\
III & $-.61103$ $[-.64061,-.58145]$\\
IV  & $-.03050$ $[-.05042,-.01057]$\\
V   & $-.007656$ $[-.009002,-.006311]$\\
VI  & $-1.81293$ $[-1.88040,-1.74545]$\\
\bottomrule
\end{tabular}
\end{table}

Mixed-AC at $\alpha=0$ has lower pointwise MSE than FQE in Env II through
Env VI, with the paired confidence intervals excluding zero. In Env I, FQE
performs better. Mixed-AC also has substantially lower mean MSE than Kernel
BRM/MQL in Env II through Env VI, as reported in
Table~\ref{tab:comparison}.

\subsection{Effect of the mixing parameter}

Table~\ref{tab:fixed_alpha_full} gives the full fixed-$\alpha$ experiment.
The same hyperparameter configuration is used at every value of $\alpha$.

\begin{table}[h]
\centering
\caption{Pointwise MSE, mean $\pm$ SE over $100$ repetitions.}
\label{tab:fixed_alpha_full}
\small
\setlength{\tabcolsep}{4pt}
\begin{tabular}{lccccc}
\toprule
& $0$ & $.25$ & $.50$ & $.75$ & $1$\\
\midrule
Env I
& $.14404\pm.00612$
& $.18737\pm.01684$
& $.33502\pm.03250$
& $.55987\pm.04945$
& $.83283\pm.06611$\\
Env II
& $.01850\pm.00039$
& $.01828\pm.00035$
& $.01943\pm.00037$
& $.02166\pm.00043$
& $.02452\pm.00053$\\
Env III
& $.08361\pm.00819$
& $.08184\pm.00799$
& $.08180\pm.00808$
& $.08250\pm.00829$
& $.08356\pm.00853$\\
Env IV
& $.05861\pm.00814$
& $.05977\pm.00829$
& $.06432\pm.00859$
& $.07048\pm.00894$
& $.07708\pm.00928$\\
Env V
& $.003393\pm.000634$
& $.003186\pm.000397$
& $.003693\pm.000423$
& $.004238\pm.000466$
& $.004695\pm.000497$\\
Env VI
& $.14376\pm.01713$
& $.11888\pm.01035$
& $.12987\pm.01041$
& $.14536\pm.01165$
& $.16010\pm.01278$\\
\bottomrule
\end{tabular}
\end{table}

The clearest improvement from a strict mixture occurs in Env VI. The paired
difference between $\alpha=.25$ and $\alpha=0$ is $-0.02489$, with its
95\% confidence interval excluding zero.

Env III has a shallow minimum at $\alpha=.5$. Env V is minimized at $\alpha=.25$ and Env IV favors $\alpha=0$. For Env II, $\alpha=.25$ remains slightly smaller than $\alpha=0$.

\subsection{Bias--variance tradeoff}
\label{app:bias_variance}

We use Env III to examine the bias--variance tradeoff in more detail. The
results are shown in Table~\ref{tab:door_bias_variance}.

\begin{table}[h]
\centering
\caption{Pointwise squared bias, variance, and MSE for
\texttt{door-open-v3}.}
\label{tab:door_bias_variance}
\small
\setlength{\tabcolsep}{6pt}
\begin{tabular}{cccc}
\toprule
$\alpha$ & \textbf{Bias$^2$} & \textbf{Variance} & \textbf{MSE}\\
\midrule
0    & .04601 & .03798 & .08361\\
.25  & .04230 & .03994 & .08184\\
.50  & .04013 & .04209 & .08180\\
.75  & .03888 & .04406 & .08250\\
1    & .03824 & .04578 & .08356\\
\bottomrule
\end{tabular}
\end{table}

As $\alpha$ increases, squared bias decreases from $.04601$ to $.03824$,
while variance increases from $.03798$ to $.04578$. The resulting mean MSE
first decreases and then increases, with a shallow minimum at $\alpha=.5$.

\subsection{Interaction between Residual Mixing and Critic Representation}
\label{app:critic_ablation}

The mixed residual and adaptive critic provide two different ways of
incorporating information from future transitions. Increasing $\alpha$
places more weight on the two-step Bellman residual, while the adaptive
critic augments current-state features with predicted one-step and two-step
future features. We therefore examine how the benefit of the adaptive critic
changes as the contribution of the two-step residual increases.

For each $\alpha$, we compare the adaptive critic with a critic using only
current-state features. We report the relative MSE improvement
\[
G(\alpha)
=
100\times
\frac{
\operatorname{MSE}_{\mathrm{current}}(\alpha)
-
\operatorname{MSE}_{\mathrm{adaptive}}(\alpha)
}{
\operatorname{MSE}_{\mathrm{current}}(\alpha)
}.
\]
Positive values indicate that the adaptive critic improves value estimation.

\begin{table}[h]
\centering
\caption{Relative MSE improvement (\%) from using the adaptive critic instead
of the current-state critic. Positive values favor the adaptive critic.}
\label{tab:critic_residual_interaction}
\small
\setlength{\tabcolsep}{6pt}
\begin{tabular}{lccccc}
\toprule
& $\alpha=0$ & $\alpha=.25$ & $\alpha=.50$ & $\alpha=.75$ & $\alpha=1$\\
\midrule
Env III (Door)
& $-0.5$ & $-0.6$ & $-0.8$ & $-0.9$ & $-1.2$\\
Env IV (Drawer)
& $\mathbf{24.3}$ & $13.4$ & $1.2$ & $-10.3$ & $-19.2$\\
Env V (Button)
& $\mathbf{83.4}$ & $60.6$ & $12.8$ & $-33.9$ & $-53.4$\\
Env VI (Faucet)
& $\mathbf{85.2}$ & $69.1$ & $33.2$ & $-9.3$ & $-36.0$\\
\bottomrule
\end{tabular}
\end{table}

The interaction is especially strong in Drawer, Button, and Faucet. At
$\alpha=0$, the adaptive critic reduces MSE by approximately $24\%$, $83\%$,
and $85\%$, respectively. As $\alpha$ increases, these gains decrease
monotonically. At larger values of $\alpha$, the interaction reverses in
these three tasks, with the current-state critic giving lower MSE. This suggests that the two mechanisms are complementary rather than simply
additive: as more two-step information enters directly through the residual,
the additional benefit of predictive future features in the critic becomes
smaller.

Door behaves differently: the two critic representations are nearly
indistinguishable across the full range of $\alpha$, with relative
differences below $1.2\%$.

Together, these results highlight the complementary roles of the two
components of Mixed-AC. Future transition information can enter through both
the Bellman residual and the critic representation, and their relative
importance depends on the environment and the choice of $\alpha$.

\subsection{Practical Offline Selection of $\alpha$}
\label{app:alpha_selection}

The fixed-$\alpha$ experiments above show that the preferred residual horizon
can vary across environments. In practice, however, the pointwise value error
is unavailable, so selecting $\alpha$ from offline data is a separate
model-selection problem. This difficulty is not specific to Mixed-AC and has
been observed more generally in offline policy evaluation
\citep{tang2021modelselection,liu2025opeselection}. We consider two fully
observable validation criteria. Neither criterion uses the true value function
to select $\alpha$.

\paragraph{Critic-scaled validation.}
The first criterion evaluates the mixed Bellman moment on a held-out
validation sample. The trajectories are divided into
$\mathcal{D}_{\mathrm{crit}}$, $\mathcal{D}_{\mathrm{fit}}$, and
$\mathcal{D}_{\mathrm{val}}$. For each candidate $\alpha$, the critic is
constructed on $\mathcal{D}_{\mathrm{crit}}$, the value function is fitted on
$\mathcal{D}_{\mathrm{fit}}$, and the validation score is computed on
$\mathcal{D}_{\mathrm{val}}$.

For each validation coordinate let
\[
c_j
=
n_{\mathrm{val}}^{-1}\sum_i Z_{ij}^2,
\qquad
c_{\max}=\max_j c_j,
\qquad
\mathcal I_{\mathrm{act}}
=
\{j:c_j>10^{-14}c_{\max}\}.
\]
The same active-coordinate rule is applied to the frozen critic for all
candidate values of $\alpha$. We define
\[
L_{\mathrm{scaled}}(\alpha)
=
\frac{1}{|\mathcal I_{\mathrm{act}}|}
\sum_{j\in\mathcal I_{\mathrm{act}}}
\frac{
\left[
n_{\mathrm{val}}^{-1}
\sum_i Z_{ij}r_{\alpha,i}
\right]^2
}{
n_{\mathrm{val}}^{-1}
\sum_i Z_{ij}^2
}.
\]
The candidate with the smallest score is selected, with ties resolved in
favor of the smaller value of $\alpha$.

Table~\ref{tab:alpha_selection_scaled} reports the resulting pointwise MSE.
The oracle candidate is computed using ground truth only after fitting and is
included as a diagnostic.

\begin{table}[h]
\centering
\caption{Performance of the critic-scaled validation rule. Excess MSE is
the difference between the selected estimator and the oracle candidate in
the same repetition.}
\label{tab:alpha_selection_scaled}
\small
\setlength{\tabcolsep}{4pt}
\begin{tabular}{lcccc}
\toprule
\textbf{Env.}
& \textbf{Selected MSE}
& \textbf{Oracle MSE}
& \textbf{Excess MSE [95\% CI]}
& \textbf{Agreement}\\
\midrule
I
& $.15918$ & $.12798$
& $.03121\ [.01771,.04470]$ & $59\%$\\
II
& $.01899$ & $.01823$
& $.00076\ [.00044,.00109]$ & $53\%$\\
III
& $.10619$ & $.08980$
& $.01639\ [.01142,.02137]$ & $39\%$\\
IV
& $.09221$ & $.07846$
& $.01374\ [.00627,.02122]$ & $65\%$\\
V
& $.004124$ & $.002030$
& $.002094\ [.000608,.003580]$ & $55\%$\\
VI
& $.22188$ & $.11568$
& $.10620\ [.07193,.14047]$ & $41\%$\\
\bottomrule
\end{tabular}
\end{table}

The critic-scaled criterion is relatively conservative in the MetaWorld
experiments and tends to favor smaller values of $\alpha$. This is consistent
with the additional variability introduced by multi-step importance weighting.
At the same time, the fixed-$\alpha$ results show that the pointwise
MSE-minimizing value of $\alpha$ can differ across environments. We therefore
also consider a second criterion that evaluates all candidates using common
one-step and two-step Bellman conditions.

\paragraph{Common-studentized validation.}
The second criterion evaluates every fitted candidate using the same
one-step and two-step Bellman conditions rather than the candidate-specific
mixed residual. For a fitted value function $\widehat V_\alpha$, define on
the validation sample
\[
X_{hij}(\alpha)
=
Z_{ij}r_{h,i}(\widehat V_\alpha),
\qquad
h\in\{1,2\},
\]
where $r_{1,i}$ and $r_{2,i}$ denote the one-step and two-step residuals,
respectively. Let
\[
\widehat m_{hj}(\alpha)
=
\frac{1}{n_{\mathrm{val}}}
\sum_i X_{hij}(\alpha)
\]
and
\[
\widehat s_{hj}^2(\alpha)
=
\frac{1}{n_{\mathrm{val}}-1}
\sum_i
\left\{
X_{hij}(\alpha)-\widehat m_{hj}(\alpha)
\right\}^2.
\]
For each active coordinate, set
\[
\epsilon_j
=
\operatorname{eps}_{\mathrm{mach}}\max\{1,c_j\},
\qquad
\operatorname{eps}_{\mathrm{mach}}
=
2.220446049250313\times10^{-16},
\]
the value returned by \texttt{np.finfo(float).eps} in the implementation.
We use the studentized score
\[
L_{\mathrm{stud}}(\alpha)
=
\frac{1}{2|\mathcal I_{\mathrm{act}}|}
\sum_{h=1}^2
\sum_{j\in\mathcal I_{\mathrm{act}}}
\frac{
n_{\mathrm{val}}\widehat m_{hj}(\alpha)^2
}{
\widehat s_{hj}^2(\alpha)+\epsilon_j
}.
\]
The same active-coordinate threshold is used here.

For this experiment, each logged dataset is divided into three
trajectory-disjoint folds. We consider all six assignments of these folds to
critic construction, value fitting, and validation. For each $\alpha$, the
six validation scores are averaged and the candidate with the smallest
average score is selected. After selection, the candidate is refitted using
the common final two-way split used in the standard experiment. Ground truth
is used only afterward to evaluate pointwise MSE and compute the oracle
candidate.

Table~\ref{tab:alpha_selection_studentized} reports results over $100$
independent repetitions.

\begin{table}[h]
\centering
\caption{Performance of the common-studentized validation rule over $100$
repetitions. The $\alpha=0$ and oracle columns are computed under the same
final-refit protocol. Ground truth is used only for post-selection
evaluation.}
\label{tab:alpha_selection_studentized}
\small
\setlength{\tabcolsep}{4pt}
\begin{tabular}{lccccc}
\toprule
\textbf{Env.}
& \textbf{Selected MSE}
& $\boldsymbol{\alpha=0}$ \textbf{ MSE}
& \textbf{Oracle MSE}
& \textbf{Agreement}
& \textbf{Spearman}\\
\midrule
I   & $.81944$  & $.14799$  & $.11823$  & $3\%$  & $-.736$\\
II  & $.01962$  & $.01819$  & $.01763$  & $9\%$  & $.144$\\
III & $.10686$  & $.10366$  & $.08984$  & $39\%$ & $-.039$\\
IV  & $.05880$  & $.04967$  & $.04223$  & $38\%$ & $.031$\\
V   & $.004233$ & $.002717$ & $.001387$ & $12\%$ & $-.503$\\
VI  & $.15029$  & $.14344$  & $.06845$  & $12\%$ & $-.230$\\
\bottomrule
\end{tabular}
\end{table}

The common-studentized criterion produces a broader range of selected values
of $\alpha$ than the critic-scaled criterion. The comparison also shows that
the ordering induced by an observable Bellman-moment criterion need not agree
with the ordering induced by pointwise value error. In particular, the two
criteria can favor different members of the same fixed-$\alpha$ estimator
family.

Taken together, these experiments illustrate the distinction between
constructing the Mixed-AC estimator family and selecting a particular member
of that family using only offline data. Both criteria are fully observable,
but their behavior depends on the validation objective, as is common in OPE
model selection. We therefore use these experiments to study practical
selection rather than as part of the fixed-$\alpha$ evaluation of residual
mixing. Developing more accurate offline selection criteria is an interesting
direction for future work.

\subsection{Scalar policy-value evaluation}
\label{app:scalar_ope}

We also evaluate the scalar value
\[
J^\pi
=
\mathbb{E}_{S\sim\nu}[V^\pi(S)],
\]
where $\nu$ is the corresponding evaluation-state distribution. For Mixed-AC, $\alpha$ is chosen using the critic-scaled validation procedure
described in Appendix~\ref{app:alpha_selection}.

We compare Mixed-AC with DualDICE~\citep{nachum2019dualdice},
GenDICE~\citep{zhang2020gendice}, and BestDICE~\citep{yang2020offpolicy}.
These methods estimate distribution corrections and are evaluated only for
the scalar policy value, rather than for pointwise $V^\pi(s)$ estimation.
This provides a complementary comparison with distribution-correction OPE
methods.

The complete results are shown in Table~\ref{tab:scalar_full}.

\begin{table}[h]
\centering
\caption{Squared scalar policy-value error over $100$ repetitions.
Entries report mean with SE in parentheses. Bold denotes the lowest
sample mean in each environment.}
\label{tab:scalar_full}
\small
\setlength{\tabcolsep}{3pt}
\renewcommand{\arraystretch}{1.4}
\begin{tabular}{@{}lcccccc@{}}
\toprule
\textbf{Method} & \textbf{I} & \textbf{II} & \textbf{III}
& \textbf{IV} & \textbf{V} & \textbf{VI}\\
\midrule

Mixed-AC
& \shortstack{$.04725$\\{\scriptsize $(.00775)$}}
& \shortstack{$\mathbf{.000945}$\\{\scriptsize $\mathbf{(.000134)}$}}
& \shortstack{$\mathbf{.11495}$\\{\scriptsize $\mathbf{(.01546)}$}}
& \shortstack{$\mathbf{.07719}$\\{\scriptsize $\mathbf{(.00867)}$}}
& \shortstack{$\mathbf{.003524}$\\{\scriptsize $\mathbf{(.000711)}$}}
& \shortstack{$\mathbf{.15939}$\\{\scriptsize $\mathbf{(.02217)}$}}\\

DualDICE
& \shortstack{$.11974$\\{\scriptsize $(.02682)$}}
& \shortstack{$.12567$\\{\scriptsize $(.00758)$}}
& \shortstack{$.64101$\\{\scriptsize $(.00824)$}}
& \shortstack{$.16485$\\{\scriptsize $(.00402)$}}
& \shortstack{$.01224$\\{\scriptsize $(.00020)$}}
& \shortstack{$.82602$\\{\scriptsize $(.01283)$}}\\

GenDICE
& \shortstack{$1.78283$\\{\scriptsize $(.16016)$}}
& \shortstack{$.002522$\\{\scriptsize $(.000198)$}}
& \shortstack{$.12537$\\{\scriptsize $(.01305)$}}
& \shortstack{$.08068$\\{\scriptsize $(.00941)$}}
& \shortstack{$.008640$\\{\scriptsize $(.000738)$}}
& \shortstack{$.32441$\\{\scriptsize $(.03335)$}}\\

BestDICE
& \shortstack{$\mathbf{.003943}$\\{\scriptsize $\mathbf{(.000465)}$}}
& \shortstack{$.007350$\\{\scriptsize $(.000410)$}}
& \shortstack{$.43554$\\{\scriptsize $(.01625)$}}
& \shortstack{$.10980$\\{\scriptsize $(.00479)$}}
& \shortstack{$.015265$\\{\scriptsize $(.000447)$}}
& \shortstack{$.69331$\\{\scriptsize $(.01481)$}}\\

\bottomrule
\end{tabular}
\end{table}

Since the two estimators are evaluated on the same generated datasets, we
also report paired differences in squared scalar error. Negative values favor
Mixed-AC.

\begin{table}[h]
\centering
\caption{Paired differences in squared scalar policy-value error between
Mixed-AC and DualDICE.}
\label{tab:scalar_paired}
\small
\setlength{\tabcolsep}{7pt}
\begin{tabular}{lcc}
\toprule
\textbf{Env.} & \textbf{Mixed-AC $-$ DualDICE} & \textbf{95\% CI}\\
\midrule
I   & $-.07249$  & $[-.12826,-.01673]$\\
II  & $-.12473$  & $[-.13976,-.10969]$\\
III & $-.52605$  & $[-.55486,-.49725]$\\
IV  & $-.08766$  & $[-.10775,-.06757]$\\
V   & $-.008714$ & $[-.01013,-.00730]$\\
VI  & $-.66663$  & $[-.71669,-.61657]$\\
\bottomrule
\end{tabular}
\end{table}

\begin{table}[h]
\centering
\caption{Paired differences in squared scalar policy-value error for
GenDICE and BestDICE. Negative values favor Mixed-AC.}
\label{tab:scalar_paired_newdice}
\small
\setlength{\tabcolsep}{4pt}
\resizebox{\linewidth}{!}{%
\begin{tabular}{lcccc}
\toprule
\textbf{Env.}
& \textbf{Mixed-AC $-$ GenDICE}
& \textbf{95\% CI}
& \textbf{Mixed-AC $-$ BestDICE}
& \textbf{95\% CI}\\
\midrule
I
& $-1.73558$ & $[-2.05330,-1.41785]$
& $.043307$ & $[.028012,.058602]$\\
II
& $-.001577$ & $[-.002045,-.001110]$
& $-.006405$ & $[-.007228,-.005582]$\\
III
& $-.010414$ & $[-.046102,.025274]$
& $-.32059$ & $[-.36459,-.27658]$\\
IV
& $-.003484$ & $[-.026470,.019503]$
& $-.032608$ & $[-.053661,-.011555]$\\
V
& $-.005116$ & $[-.006922,-.003310]$
& $-.011741$ & $[-.013352,-.010130]$\\
VI
& $-.16502$ & $[-.24426,-.08579]$
& $-.53392$ & $[-.58324,-.48461]$\\
\bottomrule
\end{tabular}%
}
\end{table}

Mixed-AC has the lowest mean scalar squared error in Env II through Env VI,
while BestDICE has the lowest mean error in Env I. The paired
Mixed-AC $-$ BestDICE intervals exclude zero in all six environments, favoring
BestDICE in Env I and Mixed-AC in the remaining five environments. Mixed-AC
also has lower mean error than GenDICE in all six environments; the paired
intervals exclude zero in Env I, Env II, Env V, and Env VI, but include zero
in Env III and Env IV. Mixed-AC has lower scalar squared error than DualDICE
in all six environments, with all six paired intervals excluding zero.

The magnitude of the error differs substantially across environments, so the
absolute MSEs should be interpreted relative to the scale of the underlying
policy values.

\subsection{DualDICE}
\label{app:dualdice}

DualDICE~\citep{nachum2019dualdice} estimates the discounted state-action
distribution correction
\[
w_{\pi/\mathcal D}(s,a)
=
\frac{d^\pi(s,a)}{d^{\mathcal D}(s,a)}
\]
without requiring knowledge of the behavior policy or explicit per-step
importance ratios.

To avoid overloading the symbol $\nu$, which denotes our evaluation-state
distribution, let $u(s,a)$ denote the primal DualDICE function and let
$\zeta(s,a)$ denote the dual function. Our implementation uses the general
convex DualDICE objective

\begin{equation*}
\begin{split}
\min_u\max_\zeta \biggl\{ & \frac{ \sum_i w_i \left[ \left\{ u(S_i,A_i) - \gamma \mathbb{E}_{A'\sim\pi(\cdot|S_i')} u(S_i',A') \right\} \zeta(S_i,A_i) - f^*(\zeta(S_i,A_i)) \right] }{ \sum_i w_i } \\
& - (1-\gamma) \mathbb{E}_{\substack{S_0\sim\nu\\A_0\sim\pi(\cdot|S_0)}} u(S_0,A_0) \biggr\}
\end{split}
\end{equation*}

where
\[
w_i=\gamma^{t_i-\min(t)},
\qquad
f(x)=\frac{|x|^{3/2}}{3/2},
\qquad
f^*(y)=\frac{|y|^3}{3}.
\]

At the population saddle point, the dual function $\zeta$ represents the
discounted state-action distribution correction. We use separate neural
networks for $u$ and $\zeta$. Each network has one hidden layer with $64$
$\tanh$ units. Optimization uses Adam for $10{,}000$ steps with minibatch
size $512$, primal learning rate $10^{-4}$, and dual learning rate $10^{-3}$.
The final density-ratio estimate is averaged over the last four saved
$\zeta$ networks.

For Env I and Env II, which have discrete actions, the expectation
\[
\mathbb{E}_{A'\sim\pi(\cdot|S')}
u(S',A')
\]
is computed exactly by summing over the available actions. MetaWorld has a
4-dimensional continuous action space. In these environments, we approximate
the same target-policy expectation by Monte Carlo sampling,
\[
\mathbb{E}_{A'\sim\pi(\cdot|S')}
u(S',A')
\approx
\frac{1}{8}
\sum_{k=1}^8
u(S',A'_k),
\qquad
A'_k\sim\pi(\cdot|S').
\]
The initial-state target-action expectation is approximated in the same way.
Thus, the MetaWorld experiments use the original DualDICE variational
objective with Monte Carlo approximation of the continuous target-action
expectations. The final configs use $64$ action samples for Env I and Env II
and $8$ action samples for each MetaWorld task. MetaWorld state inputs are
standardized using moments computed from the data.

DualDICE uses the complete dataset in each repetition rather than the
critic-construction, value-fitting, and validation partitions used by
Mixed-AC. This gives $2030$ tuples in Env I and Env II and $5015$ tuples in each MetaWorld task.

The policy-value criterion in the original DualDICE formulation is normalized
by $(1-\gamma)$. Since our estimand is the unnormalized discounted value
$J^\pi=\mathbb{E}_{S\sim\nu}[V^\pi(S)]$, we report the self-normalized
estimator
\[
\widehat J^\pi_{\mathrm{DD}}
=
\frac{1}{1-\gamma}
\frac{
\sum_i
\gamma^{t_i}
\widehat\zeta(S_i,A_i)R_i
}{
\sum_i
\gamma^{t_i}
\widehat\zeta(S_i,A_i)
}.
\]
The same evaluation-state distribution $\nu$ used to define $J^\pi$ is used
as the initial-state distribution in the DualDICE objective. Both methods are therefore scored against the same simulator reference scalar.

The neural parameterization of $\zeta$ is unconstrained, so its finite-sample
outputs are not forced to be nonnegative. Across the experiments, some
negative estimated corrections are observed, with the largest amount in
Env III. Nevertheless, all $100$ optimization runs completed successfully in
each environment.

\subsection{GenDICE and BestDICE}
\label{app:newdice}

We additionally compare with GenDICE~\citep{zhang2020gendice} and
BestDICE~\citep{yang2020offpolicy}. Both methods are implemented using the
NeuralDICE parameterization with separate networks for the primal and dual
functions. Each network has two hidden layers of width $64$ with ReLU
activations and Xavier-uniform initialization.

Optimization uses Adam with learning rate $10^{-4}$ for the primal, dual,
and normalization variables, $(\beta_1,\beta_2)=(.9,.999)$, epsilon
$10^{-7}$, elementwise gradient clipping at $1$, minibatch size $2048$, and
$100{,}000$ updates. We use $\gamma=.9$ and the quadratic function
\[
f(x)=\frac{|x|^2}{2}.
\]
The density-ratio output is parameterized as the square of the unconstrained
network output and is therefore nonnegative.

For GenDICE, the primal regularizer is $1$, the dual regularizer is $0$, and
the reward term is omitted from the Bellman objective. For BestDICE, the
primal regularizer is $0$, the dual regularizer is $1$, and the reward term is
included. Both methods use a normalization regularizer equal to $1$.

The logged transitions are sampled uniformly without explicit $\gamma^t$
weighting. For Env I and Env II, target-action expectations are computed
exactly by summing over the discrete action space. For the continuous-action
MetaWorld tasks, one target-policy action is sampled per state and update.
The initial-state term uses the same fixed evaluation-state distribution
$\nu$ that defines
\[
J^\pi=\mathbb{E}_{S\sim\nu}[V^\pi(S)].
\]

The final scalar estimate is
\[
\widehat J^\pi
=
\frac{1}{1-\gamma}
\frac{
\sum_i \widehat\zeta(S_i,A_i)R_i
}{
\sum_i \widehat\zeta(S_i,A_i)
}.
\]
No additional $\gamma^t$ factor is used in this final estimator.

GenDICE and BestDICE use the complete logged dataset in each repetition,
giving $2030$ transitions in Env I and Env II and $5015$ transitions in each
MetaWorld task. In contrast, Mixed-AC uses separate critic-construction,
value-fitting, and validation subsets. Ground-truth policy values are used
only after fitting to compute the evaluation errors.

\subsection{Importance-ratio sensitivity}
\label{app:ratio_sensitivity}

We study the effect of target-density approximation and self-normalization at
$\alpha=.5$. For MetaWorld, the primary estimator uses the exact target
density $\pi$ together with the estimated behavior density $\widehat b$. We
compare this with the previous estimated-target approximation, both before
and after self-normalization. This sensitivity experiment uses $20$
repetitions and approximately $2000$ training tuples per repetition.

\begin{table}[h]
\centering
\caption{Mean pointwise MSE in the MetaWorld importance-ratio sensitivity
experiment.}
\label{tab:ratio_sensitivity}
\small
\setlength{\tabcolsep}{3.5pt}
\begin{tabular}{lcccc}
\toprule
& \multicolumn{2}{c}{\textbf{Unnormalized}}
& \multicolumn{2}{c}{\textbf{Self-normalized}}\\
\cmidrule(lr){2-3}\cmidrule(lr){4-5}
& $\widehat\pi/\widehat b$
& $\pi/\widehat b$
& $\widehat\pi/\widehat b$
& $\pi/\widehat b$\\
\midrule
Env III & $123.80$ & $204.38$ & $.2368$   & $.2364$\\
Env IV  & $66.23$  & $82.10$  & $.1219$   & $.1196$\\
Env V   & $4.377$  & $5.756$  & $.003571$ & $.003646$\\
Env VI  & $225.38$ & $280.98$ & $.1202$   & $.1209$\\
\bottomrule
\end{tabular}
\end{table}

The unnormalized ratios produce very large MSE in the MetaWorld tasks,
whereas self-normalization substantially reduces this instability. After
self-normalization, using the exact target density or the previous estimated
target density gives very similar performance. Thus, replacing
$\widehat\pi$ by the analytically known $\pi$ removes the need to estimate
the target density without materially changing the empirical conclusions.

In the main $100$-repetition experiment, Door has the most variable
two-step weights. Its mean two-step effective sample size is approximately
$151$, the mean maximum weight is approximately $28.4$, and about $24\%$
of the total weight is carried by the largest one percent of observations.
The corresponding weights are substantially less concentrated in the other
three MetaWorld tasks.

\subsection{Computational scaling}
\label{app:scaling}

We measure model-fitting time, peak Python-attributed memory, and feature
storage as the sample size increases. Each size is evaluated over $10$
repetitions.

The reported model-fitting time is measured from immediately before the
estimator fit through completion of value fitting. It includes feature
standardization and feature-map construction, conditional-mean network
training, and value fitting; prediction on evaluation states is timed
separately. The smallest sample sizes are affected by fixed initialization
overhead, which explains the nonmonotonic timing between 1k and 2k.

\begin{table}[h]
\centering
\caption{Mean total fitting time in seconds.}
\label{tab:scaling_time}
\small
\setlength{\tabcolsep}{4pt}
\begin{tabular}{lrrrrrr}
\toprule
& 1k & 2k & 5k & 7.5k & 10k & 20k\\
\midrule
Env I   & .673 & .587 & .783 & .940 & 1.078 & --\\
Env II  & .714 & .640 & .935 & 1.185 & 1.453 & --\\
Env III & 1.001 & .726 & .852 & .955 & 1.024 & 1.464\\
Env IV  & .998  & .709 & .802 & .933 & 1.000 & 1.369\\
Env V   & 1.030 & .743 & .850 & .948 & 1.050 & 1.438\\
Env VI  & 1.241 & .978 & 1.196 & 1.408 & 1.571 & 2.285\\
\bottomrule
\end{tabular}
\end{table}

\begin{table}[h]
\centering
\caption{Peak Python-attributed memory in MB.}
\label{tab:scaling_memory}
\small
\setlength{\tabcolsep}{4pt}
\begin{tabular}{lrrrrrr}
\toprule
& 1k & 2k & 5k & 7.5k & 10k & 20k\\
\midrule
Env I   & 11.32 & 11.23 & 27.84 & 41.72 & 55.32 & --\\
Env II  & 13.11 & 16.41 & 40.76 & 61.08 & 81.02 & --\\
Env III & 13.42 & 13.28 & 32.22 & 48.40 & 64.10 & 127.11\\
Env IV  & 13.42 & 12.98 & 31.55 & 47.42 & 62.81 & 124.62\\
Env V   & 13.42 & 12.98 & 31.55 & 47.42 & 62.81 & 124.62\\
Env VI  & 15.76 & 21.86 & 53.18 & 80.00 & 105.97 & 210.27\\
\bottomrule
\end{tabular}
\end{table}

\begin{table}[h]
\centering
\caption{Feature storage in MB.}
\label{tab:scaling_storage}
\small
\setlength{\tabcolsep}{4pt}
\begin{tabular}{lrrrrrr}
\toprule
& 1k & 2k & 5k & 7.5k & 10k & 20k\\
\midrule
Env I   & 2.29 & 4.33 & 10.96 & 16.57 & 21.92 & --\\
Env II  & 3.44 & 6.50 & 16.44 & 24.85 & 32.88 & --\\
Env III & 1.729 & 3.673 & 9.075 & 13.828 & 18.365 & 36.515\\
Env IV  & 1.729 & 3.673 & 9.075 & 13.828 & 18.365 & 36.515\\
Env V   & 1.729 & 3.673 & 9.075 & 13.828 & 18.365 & 36.515\\
Env VI  & 3.457 & 7.346 & 18.149 & 27.656 & 36.731 & 73.030\\
\bottomrule
\end{tabular}
\end{table}

Beyond the smallest sample sizes, model-fitting time, memory, and feature storage
grow approximately linearly over the evaluated range. At approximately
$20{,}000$ tuples, the total fitting time is below three seconds for each
MetaWorld task.

Runtime is measured using \texttt{time.perf\_counter}. Peak memory is
measured using \texttt{tracemalloc} and therefore refers to Python-attributed
allocated memory. Feature storage is calculated from the stored feature
arrays. The synthetic experiments use a fixed data budget of approximately
$2030$ tuples, so larger scaling sizes were not evaluated for Env I and Env II.

\end{document}